\documentclass[twoside,11pt]{article}
\usepackage[mathcal]{euscript}
\usepackage{bm,url}                     
\usepackage{amsfonts}
\usepackage{graphicx}               
\usepackage{natbib}                 
\usepackage{colortbl}               
\usepackage{booktabs}
\usepackage{amssymb}
\usepackage{mwe} 
\usepackage{caption}
\usepackage{amsmath}
\usepackage{xcolor}
\usepackage{amsthm}
\usepackage{todonotes}
\usepackage{bbold}
\newtheorem{definition}{Definition}
\newtheorem{remark}{Remark}
\newtheorem{theorem}{Theorem}

\newtheorem{lemma}{Lemma}
\newtheorem{proposition}{Proposition}

\usepackage[english]{babel}
\usepackage{graphicx}
\usepackage{color}
\usepackage{array}
\usepackage{mathrsfs}
\usepackage{setspace}
\usepackage{hyperref}

\newcommand{\eps}{\varepsilon}

\usepackage{amsfonts}

\newcommand{\X}{\mathcal{X}}
\newcommand{\Z}{\mathcal{Z}}
\newcommand{\T}{\mathcal{T}}
\newcommand{\Y}{\mathcal{Y}}

\newcommand{\unclass}{\mathcal{U}}

\DeclareMathOperator*{\argmax}{arg\,max}
\newcommand{\D}{\mathcal{D}}




\begin{document}

	\begin{center}
		\Large \bf On semi-supervised learning\\
	\end{center}
	\normalsize
	
	\
	
	\begin{center}
		Alejandro Cholaquidis$^a$, Ricardo Fraiman$^a$ and Mariela Sued $^b$\\
		$^a$ CABIDA  and Centro de Matem\'atica,\\ Facultad de Ciencias, Universidad de la Rep\'ublica, Uruguay\\
		$^b$ Instituto de C\'alculo, \\ Facultad de Ciencias Exactas y Naturales, Universidad de Buenos Aires\\
	\end{center}

	\begin{abstract}
	
	Semi-supervised learning deals with the problem of how, if possible, to take advantage of a huge amount of unclassified data, to perform a classification in situations when, typically, there is little labeled data. Even though this is not always possible (it depends on how useful, for inferring the labels, it would be to know the distribution of the unlabeled data), several algorithm have been proposed recently. 
	
	A new algorithm is proposed, that under almost necessary conditions, 
	attains asymptotically the performance of the best theoretical rule as the amount of unlabeled data tends to infinity. The set of necessary assumptions, although reasonable, show that semi-supervised classification only works for very well conditioned problems. The focus is on understanding when and why semi-supervised learning works when the size of the initial training sample remains fixed and the asymptotic is on the size of the unlabeled data. The performance of the algorithm is assessed in the well known ``Isolet'' real-data of phonemes, where a strong dependence on the choice of the initial training sample is shown.
\end{abstract}

\textit{Semi-supervised learning; Small training sample; Consistency.}

	\section{Introduction}\label{sec:intro}
	
Semi-supervised learning (SSL) dates back to the 60's,  starting with the pioneering works of \cite{scudder}, \cite{fralick} and \cite{agrawala},  among others. Later on, the problem was addressed by the highly influential works of \cite{cov,cov2}. The first one shows that  when the size $l$ of the unlabelled sample is equal to infinity, the classification error converges exponentially fast to the Bayes risk, if the size $n$ of the labelled sample converges to infinity. In the second one it is assumed that the density of the covariates is given by  a parametric model $p(x)=\pi p(x|y=\theta)+(1-\pi) p(x|y=1-\theta)$, where $p(x)$ is known except for the parameters $\theta\in \{0,1\}$ and  $\pi\in (0,1)$. 
 Under regularity conditions consistency is shown if the minimum between $n$ and $l$ converges to infinity.	

 Recently SSL has gained paramount importance due  to the huge amount of data coming from diverse sources, such as the internet, genomic research, text classification, and many others; see, for instance  \cite{zhu} or \cite{MIT} for a survey on SSL.
	This large amount of data is typically unlabelled; the main purpose of SSL is to jointly classify these data in the presence of a small ``training sample''. Namely, a lot of unlabelled data together with a small quantity of labelled data must  be combined to classify each unlabelled observation.  As   in \cite{arnold} ``A setting that is closely related to semi-supervised learning is transductive learning \cite{vap:98,joachims,joachims2}'', which is a special case of SSL, where the auxiliary unlabelled data-set coincides with the test sample. On the other hand, from  a transductive learning procedure any other data point can be classified with  any machine learning algorithm, by using as training sample the output of the transductive procedure.

	On the other hand, as discussed  in \cite{MIT}, the following question naturally arises: ``in comparison with a supervised algorithm that uses only labelled data, can one hope to have a more accurate prediction by taking into account the unlabelled points? [...]  In principle, the answer is yes''. Nevertheless, 
		 having a large set of data to classify is like knowing $p(x)$,  the distribution of the features vector; thus, the gain  in prediction accuracy   depends on the ability of $p(x)$ to provide information on $p(y|x)$. 
		 As it is pointed in \cite{chapzien}, ``the cluster assumption is key to successful semi-supervised learning'', which is expressed in terms of the so called \textit{valley condition}. Roughly speaking, this condition imposes $p(x)$ to have a deep valley between the classes.  In other words, clustering techniques have to perform reasonably  well in the presence of only  unlabelled data. 	Smoothness of the labels with respect to the features, or low density at the decision boundary, are examples of the kind of hypotheses required to get satisfactory results in the cluster analysis literature.

Another important issue in SSL is the amount of labelled data necessary to be able to classify the unlabelled data.  In the framework of generative models, 
		when  $p(x)$   is assumed to be an identifiable  mixture of parametric distributions, \cite{zhu} argued that  ``ideally we only need one labelled example per component''   to fully determine the mixture distribution. Indeed, under the regularity conditions presented in Section \ref{seccons},  one labelled example per component will also be  enough to prove the consistency of the algorithm that  we propose in this work.\\
	Recently, other approaches as self-training, co-training, transductive support vector machines, and graph-methods among others, have been reported.  Although there is a large body of literature on SSL, as it is pointed out by \cite{wass}, ``making precise how and when these assumptions actually improve inferences is surprisingly elusive, and most papers do not address this issue; some exceptions are \cite{rigollet}, \cite{singh}, \cite{lafferty}, \cite{nadler}, \cite{ben}, \cite{sinha}, \cite{belkin},  \cite{vap:98}, \cite{wsp:07} and \cite{niyogi}''. In \cite{haffari} the well known Yarowski   algorithm is analyzed, while  in \cite{wass} an interesting method called ``adaptive semi-supervised inference'' is introduced, and a minimax framework for the problem is provided.

		 \
		 
		Our proposal is focused on the case of a small and fixed  training sample size, but the amount of the unlabelled data goes to infinity (see Figure \ref{ilustracion}).
		We  provide a simple algorithm to classify the unlabelled data,   which has a resemblance to Yarowski's formulation. We prove that, under some natural and necessary conditions (some of them are in terms of well--known geometric constraints on the support of $p(x)$ coming from stochastic geometry), our method performs as good as the theoretical (unknown) best rule,  with probability one, asymptotically in $l$.  These conditions are discussed in Section \ref{assump} where we argue that  most of them seems to be necessary. 
	 
		The algorithm is of the ``self-training'' type; this means that at every step a point from the unlabelled set is labelled using the training sample  built up to that step, and incorporated into the training sample. In this way the training sample increases from one step to the next.    A simplified, computationally more efficient alternative algorithm is also provided in Section \ref{faster}.

 This paper is organized as follows: Section \ref{notation} introduces the basic notation and the set-up necessary to read the rest of the article. Section \ref{thrule} proves that the Bayes rule is the best one to classify the unlabelled sample. In Section \ref{alg} we introduce the algorithm and prove that all the unlabelled data are classified. Section \ref{seccons} proves that, as the number of unlabelled data grows to infinity, the algorithm performs as good as Bayes rule. In Section \ref{faster} we introduce a simplified and faster algorithm. Section \ref{sim} analyses two examples using simulated data, and a third one based on a real data set. Lastly, Section \ref{assump}  discusses the hypotheses. The proofs are included in Appendixes A and B.

\begin{figure}[htbp]
		\begin{center}
			\includegraphics[scale=.2]{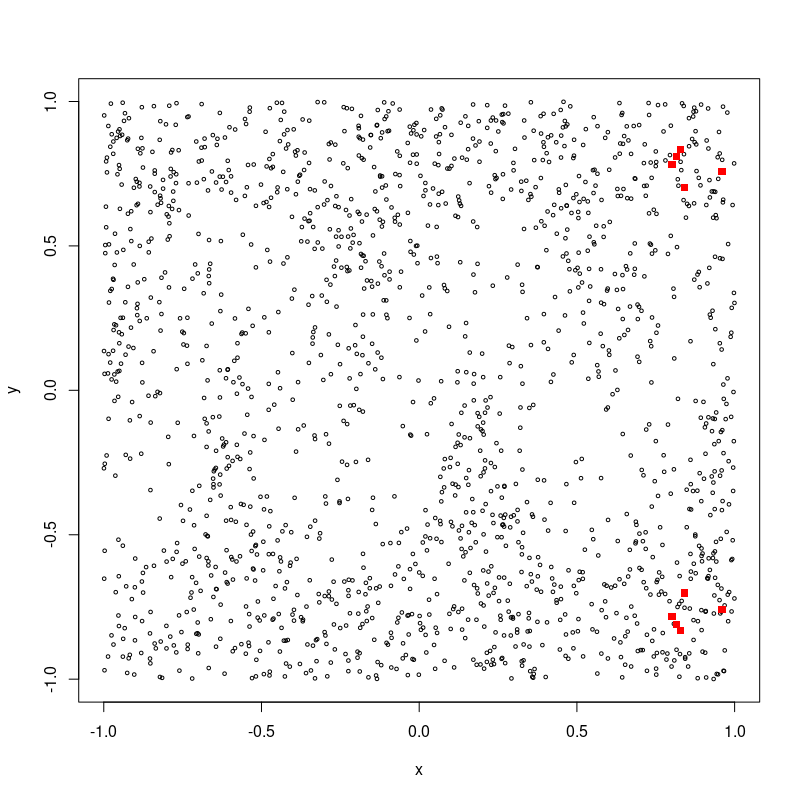} 
			\caption{In  black: the $X_j$ without labels, in red a small training sample (5 data from each subpopulation).}
			 \label{ilustracion}
		\end{center}
	\end{figure}

\section{Notation and set-up}\label{notation}

Along this work we use $\mathcal{I,A,B}$ to denote probability events,  namely, subsets of a (rich enough) probability space $(\Omega,\Sigma,\mathbb{P})$. Instead $I,A,B$ are used to denote subsets in the Euclidean space $\mathbb{R}^d$. In most of the cases, the probability events are defined through conditions on the random variables that concern $\mathbb{R}^d$. We use the same letter in different styles with the hope to facilitate the reading of the work.

We consider $\mathbb{R}^d$ endowed with the Euclidean norm $\|\cdot\|$. The open ball of radius $r\geq 0$ centered at $x$ is denoted by $B(x,r)$. With a slight abuse of notation, if $A\subset \mathbb{R}^d$, then we write $B(A,r)=\cup_{s\in A} B(s,r)$. The $d$-dimensional Lebesgue measure is denoted by $\mu_L$, while $\omega_d=\mu_L(B(0,1))$. For $\delta> 0$ and $A\subset \mathbb{R}^d$, the $\delta$-interior of $A$ is defined as $A\ominus B(0,\delta)=\{x:B(x,\delta)\subset A\}$. The distance from a point $x$ to a set $A$ is denoted by $d(x,A)$, i.e. $d(x,A)=\inf\{\|x-a\|:a\in A\}$. If $A\subset\mathbb{R}^d$, then $\partial A$ denotes its boundary, $int(A)$ its interior, $A^c$ its complement, and $\overline{A}$ its closure.
Let $\mathcal{D}^n=(\X^n,\Y^n)=\hspace{-0.1cm}\{(X^1,Y^1),\dots,(X^n,Y^n)\}$ be  a given  realization of a sample with the same distribution as $(X,Y)\in S\times \{0,1\}$,  where 
$S\subset \mathbb{R}^d$. We assume that they are identically distributed but not necessarily independent.
Let  $\eta(x)$ denote the conditional mean of $Y$ given $X=x$; namely, $\eta(x)= \mathbb{E}(Y|X=x)$. Consider $\mathcal{D}_l=(\X_l,\Y_l)=\{(X_1,Y_1),\dots,(X_l,Y_l)\}$ an iid sample with the same distribution as $(X,Y)$, where $n\ll l$. The sample $\X_l=(X_1,\dots,X_l)$ is known while  the labels $\Y_l=(Y_1, \ldots, Y_l)$ are unobserved. 

\section{Theoretical best rule}\label{thrule}

It is well known that the optimal rule for classifying a single new datum $X$ is given by the Bayes rule, $g^*(X)=\mathbb{I}_{\{\eta(X)\geq 1/2\}}$.  In the present paper, we move from the classification problem of a single datum $X$ to a framework where each coordinate of  $\X_l=(X_1,\dots,X_l)$
must be classified. The label associated with each coordinate $X_i$  may be constructed
on the basis of the entire vector and, therefore, a classification rule  $\mathbf{g}_l=(g_1,\dots,g_l)$ comprises $l$ functions $g_i:S^l\rightarrow \{0,1\}$, where $g_i(\X_l)$ indicates the 
label assigned to $X_i$ based on the entire set of observations $\X_l$. The performance of a rule $\mathbf{g}_l=(g_1,\dots,g_l)$ is given by its mean  classification error, namely $L({\mathbf{g}_l}):=\mathbb{E}\Big(\frac{1}{l}
\sum_{i=1}^l
\mathbb{I}_{g_i(\X_l) \neq Y_i}\Big).$
Observe that the random variable $\#\{i:g_i(\X_l)\neq Y_i, (X_i,Y_i)\in \D_l\}$ is not necessarily $Binomial(l,p)$ for some $p\geq 0$.

The next result establishes that the optimal classification rule classifies each element ignoring the presence of the rest of the observations, 
by means of invoking the  Bayes rule.

   \begin{proposition} \label{prop0} The performance of a rule $\mathbf{g}_l$ is bounded from below by $L^*= \mathbb{P}(g^*(X)\neq Y)$, and the lower bound  is attained with the rule $\mathbf{g}^*_l=(g_1^*,\dots,g_l^*)$,
   	where $g_i^\ast(\X_l)=g^\ast(X_i)$ for all $i=1,\dots,l$. 
   \end{proposition}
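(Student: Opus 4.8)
The plan is to reduce this vector-valued problem to the classical one-datum Bayes argument by conditioning on $\X_l$; the point is that under the iid assumption each $Y_i$ ``sees'' only its own $X_i$, so the $l$ coordinates decouple. First I would use linearity of expectation together with the tower property to write
\[
L(\mathbf{g}_l)=\frac{1}{l}\sum_{i=1}^l \pr\big(g_i(\X_l)\neq Y_i\big)
=\frac{1}{l}\sum_{i=1}^l \E\Big[\pr\big(g_i(\X_l)\neq Y_i\mid \X_l\big)\Big].
\]
Fixing $i$ and conditioning on $\X_l$, the value $g_i(\X_l)$ becomes a fixed element $a\in\{0,1\}$ since $g_i$ is a measurable function of $\X_l$; and because $\D_l$ is iid, $(X_i,Y_i)$ is independent of $\{X_j:j\neq i\}$, so $\pr(Y_i=1\mid\X_l)=\pr(Y_i=1\mid X_i)=\eta(X_i)$. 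Hence
\[
\pr\big(g_i(\X_l)\neq Y_i\mid \X_l\big)=\ind_{\{g_i(\X_l)=0\}}\,\eta(X_i)+\ind_{\{g_i(\X_l)=1\}}\big(1-\eta(X_i)\big)\ \geq\ \min\{\eta(X_i),1-\eta(X_i)\},
\]
which is just the elementary minimization of $(1-a)\eta(X_i)+a(1-\eta(X_i))$ over $a\in\{0,1\}$, with equality whenever $g_i(\X_l)=g^*(X_i)$ (any choice being optimal on $\{\eta(X_i)=1/2\}$).

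Next I would take expectations in the last display. Since $X_i\stackrel{d}{=}X$ and $\min\{\eta(X),1-\eta(X)\}=\pr(g^*(X)\neq Y\mid X)$, one gets $\E\big[\min\{\eta(X_i),1-\eta(X_i)\}\big]=\pr(g^*(X)\neq Y)=L^*$ for every $i$, and therefore $L(\mathbf{g}_l)\geq \frac1l\sum_{i=1}^l L^*=L^*$. For the rule $\mathbf{g}^*_l$ with $g_i^*(\X_l)=g^*(X_i)$ the conditional inequality above is an equality for each $i$, so $L(\mathbf{g}^*_l)=L^*$, establishing both assertions.

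I do not expect a genuine obstacle here; the one step that deserves care is the identity $\pr(Y_i=1\mid\X_l)=\eta(X_i)$, which is exactly where independence across the coordinates of $\D_l$ is used and is what prevents any rule from exploiting the remaining observations to beat the coordinatewise Bayes rule. Everything else is the standard pointwise minimization plus Fubini. (If one also allowed the $g_i$ to depend on the fixed labeled sample $\D^n$, the same proof applies verbatim after additionally conditioning on $\D^n$, provided $\D^n$ is independent of $\D_l$.)
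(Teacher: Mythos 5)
Your proof is correct and follows essentially the same route as the paper's: decompose the risk coordinatewise by linearity and reduce each term to the single-datum Bayes bound by conditioning (the paper conditions on $\X_l\setminus X_i$ and invokes the one-observation Bayes lower bound directly, while you condition on all of $\X_l$ and carry out the pointwise minimization of $(1-a)\eta(X_i)+a(1-\eta(X_i))$ explicitly — a cosmetic difference). Your explicit justification of $\pr(Y_i=1\mid \X_l)=\eta(X_i)$ via independence within $\D_l$ is precisely the step the paper's first inequality leaves implicit, so nothing is missing.
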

   
 In practice, since the distribution of $(X,Y)$ is unknown, an estimator of $\mathbf{g}^*_l$  may be defined by a sequence 
   $\mathbf{g}_{n,l}=(g_{n,1},\dots,g_{n,l})$, where  $g_{n,i}:S^l\times (S\times \{0,1\})^n\to \{0,1\}$   indicates the label to be assigned to the element $X_i\in \mathcal X_l$. 
   In what follows we try to find a sequence $\mathbf{g}_{n,l}(\mathcal X_l, \mathcal D^n)$, such that
\begin{equation} \label{consistency}
\lim_{l\rightarrow \infty} \mathbb{E}_{\mathcal{D}_l}\Big(\frac{1}{l}\sum_{i=1}^l \mathbb{I}_{g_{{n,i}}(\X_l,\mathcal{D}^n)\neq Y_i}\Big)- L(\mathbf{g}^*_l)=0, \text{ for a fixed realization }\mathcal{D}^n,
\end{equation}
where $\mathbb{E}_{\mathcal{D}_l}$ denotes the expectation wrt $\mathcal{D}_l$.

\begin{remark}
	It can be surprising that the limit in display \eqref{consistency} does not depend explicitly on the size $n$ of the initial training sample $\mathcal D^n$. Our purpose is to analyze under which conditions such a strong statement can be derived. As it is proved in Theorem \ref{teoconst},  the initial training sample must be well located in the sense of assumption H8 given below.  Moreover,   strong but almost necessary assumptions discussed in Section \ref{assump} are required to  get the desired result.
\end{remark}

Next section presents an algorithm  that, under several conditions (given in section \ref{seccons}), satisfies a stronger property. More precisely, we will show that 
\begin{eqnarray*}
\lim_{l\rightarrow \infty}	\frac{1}{l}\sum_{i=1}^l \mathbb{I}_{g_{{n,i}}(\X_l,\mathcal{D}^n)\neq Y_i} = \mathbb{P}\{g^*(X)\neq Y\} \quad a.s.  \end{eqnarray*}
where $g_{n,i}=g_{n,l,r(i)}$ and $r(i)$ is the step of the algorithm at which the point $X_i$ is classified.
 
\section{Algorithm}\label{alg}

We provide an  algorithm  which is asymptotically optimal in the sense of satisfying condition \eqref{consistency}.
For this purpose, we update the training sample sequentially incorporating   into the initial set $\D^n$ an observation  $X_{j_i}$ in  $\X_l$ with a predicted label $\tilde Y_{j_i} \in \{0,1\} $. 
At each step we choose the point whose score to predict its label  is as extreme as possible, as stated in display \eqref{ji}. 
Scores  are constructed according to the  majority rule in a neighborhood of the corresponding observations to be classified; i.e., we estimate $\eta(x)$  with a Nadaraya-Watson estimator  using  a uniform kernel,  based on  both $\mathcal D^n$ and those points already classified by the algorithm up to the present step. In this way  we choose the  ``\textit{best classifiable point}'' from those that remain unclassified, as indicated in the following recipe:

\begin{itemize}\item[Initialization:]
	Let $\Z_0=\X^n$, $\unclass_0=\X_l$,  $\T_0=\D^n$.

	\item[STEP $j$:] For $j$ in $\{1,\ldots,l\}$,
	choose the \textit{best classifiable point} in $\unclass _{j-1}$, from those that are at a distance smaller than $h_l$ from the  points already classified, as follows: 
	let $\unclass_{j-1}(h_l)=\{ X\in \unclass_{j-1}: d(\Z_{j-1}, X)< h_l\}$;  
	for $X_i \in \unclass_{j-1}(h_l)$, consider  
	\begin{multline}
		\label{eta_i} 
		\hat{\eta}_{j-1}(X_i)=\\ \frac{\sum\limits_{\{r:(X^r,Y^r)\in \D^n\}}\! Y_r\mathbb{I}_{B(X_{i},h_l)}(X^r)+\sum\limits_{\{r:(X_r,\tilde{Y}_r)\in \T_{j-1}\setminus \D^n\}} \!\tilde{Y}_r\mathbb{I}_{B(X_{i},h_l)}(X_r)}{\sum\limits_{\{r:(X^r,Y^r)\in \D^n\}}\mathbb{I}_{B(X_{i},h_l)}(X^r)+\sum\limits_{\{r:(X_r,\tilde{Y}_r)\in \T_{j-1}\setminus \D^n\}}\mathbb{I}_{B(X_{i},h_l)}(X_r)} , 
	\end{multline}
	\begin{equation} 
			\label{ji}\text{ 	and define  } X_{i_j}= \argmax_{i:X_i\in \unclass_{j-1}(h_l)} \max\Big\{\hat{\eta}_{j-1}(X_i),1-
		\hat{\eta}_{j-1}(X_i)\Big\}.
	\end{equation}
	 If there is more than one $i_j$ satisfying \eqref{ji},  choose one that maximizes 
	 \begin{equation}\label{card}
	 \#\{\X_l\cap B(X_{i_j},h_l)\}.
	 \end{equation}
	  Then label $X_{i_j}$ with $\tilde{Y}_{i_j}$ defined by $\tilde{Y}_{i_j}=
	g_{n,l,j-1}(X_{i_j})$, where $g_{n,l,j-1}$ is the classification rule associated with $\hat{\eta}_{j-1}$ defined in  \eqref{eta_i}. Namely, $\tilde{Y}_{i_j}= \mathbb{I}_
	{\{\hat{\eta}_{j-1}(X_{i_j})\geq1/2\}}$.  Consider
	$$\Z_j=\Z_{j-1}\cup \{X_{i_j}\} ,\quad \unclass_j=\unclass_{j-1}\setminus \{X_{i_j}\} \quad\hbox{and} \quad  \T_{j}=\T_{j-1}\cup \{(X_{i_j},\tilde{Y}_{i_j})\} .$$

	\item[OUTPUT:] $\{(X_{i_1},\tilde{Y}_{i_1}),\dots,(X_{i_l},
	\tilde{Y}_{i_l})\}$.
	\end{itemize}

Alternatively, to reduce the computational time,  in Step $j$, instead of choosing only one point satisfying \eqref{ji} and maximizing \eqref{card}, it is possible to choose, among the points that satisfy \eqref{ji}, all those fulfilling \eqref{card}. More precisely, we define $\aleph_j$ as the set of all the points that satisfy \eqref{ji} and $\Gamma_{j}=\{X_{1_j},\dots,X_{m_j}\}\subset \aleph_{j}$ that maximize $\#\{\X_l\cap B(X_{r_j},h_l)\}$.
Then we label $X_{1_j},\dots,X_{m_j}$ with $\tilde{Y}_{1_j},\dots,\tilde{Y}_{m_j}$ defined by $\tilde{Y}_{r_j}=g_{n,l,j-1}(X_{r_j})$ for all $X_{r_j}\in \Gamma_j$, where $g_{n,l,j-1}$ is the classification rule associated with $\hat{\eta}_{j-1}$ defined in  \eqref{eta_i}. More precisely, $\tilde{Y}_{r_j}= \mathbb{I}_
{\{\hat{\eta}_{j-1}(X_{r_j})\geq1/2\}}$. Lastly $\Z_j=\Z_{j-1}\cup\Gamma_j ,\quad \unclass_j=\unclass_{j-1}\setminus \Gamma_j \quad\hbox{and} \quad  \T_{j}=\T_{j-1}\cup \{(X_{1_j},\tilde{Y}_{1_j}),\dots,(X_{m_j},\tilde{Y}_{m_j})\} .$

The results discussed in the remainder of this work 
hold for both versions of the algorithm. To simplify the notation, they are only presented for the first version, labelling one point at each step. However, the data analysis developed in Section \ref{sim}
is based on the second version of the algorithm.

We will now  prove that the algorithm 
classifies the whole set $\X_l$. For that purpose,  
define $I_0=\eta^{-1}\left\{[0,1/2)\right\},\ I_1=\eta^{-1}\left\{(1/2,1]\right\},$
and assume that $I_0$ and $I_1$ are connected and \textit{coverable}, as stated in condition H3 below. Observe that $I_1\cup I_0\cup\eta^{-1}(1/2)= S$, where  $S$ is assumed to be the support of the random vector  $X$. We decided to include H3 to facilitate the proof of Proposition \ref{alg2}. In Proposition \ref{mariela} we will provide sufficient conditions which guarantee the  validity of H3. Such conditions are  expressed in terms of geometric restrictions on $I_a$, $a=0,1$,  regularity assumptions on the density function $f$ of the distribution of $X$, and on the rate at which the  bandwidth  $h_l$ decreases to zero. These conditions  will also  be discussed  in Section \ref{assump}. Additionally, we  require to have  at least one point of the training sample in $I_a$, for $a=0,1$.  
To be more precise, consider the following assumptions: 
\begin{itemize}

		\item[H1.] $\mathbb {P}\{X\in \eta^{-1}(1/2)\}=0$.

	\item[H2.] For $a=0,1$, $i)$  $I_a$ is  connected, and $ii)$  $\mathbb {P}(X\in I_a)>0 $
	
\item[H3.]  The covering property: the probability event $\mathcal I_a$ fulfills $\mathbb{P}( \mathcal I_a)=1$, for $a=0,1$, where, 
		\begin{eqnarray*}
		\mathcal I_a=\bigcup_{l_0}\bigcap_{l\geq l_0}\mathcal I_{a,l} \quad \hbox{and}\quad  
			\mathcal I_{a,l}=\left\{\omega \in \Omega:I_a\subseteq \bigcup_{X\in \X_{l}\cap I_a } B(X,h_l/2)\right\},l\in \mathbb N.
		\end{eqnarray*}
	
		\item[H4.] There exists $X_a^\ast$ in $\D^n$ such that $X_a^\ast\in I_a$, for $a=0,1$.  
\end{itemize}

In the sequel, we will assume H1 and therefore $\mathbb {P}(X\in I_0\cup I_1 )=1$. We can now establish that, for $l$ large enough, the algorithm assigns labels to each point  in  $\X_l$.

\begin{proposition} \label{alg2}  Assume $H1$, $H2$ i), $H3$ and $H4$. Then, with probability one, for $l$ large enough, all the points in $\X_l$ are classified by the algorithm: $\mathbb{P}(\mathcal F)=1$, where 
 $\mathcal F=\cup_{L=1}^\infty\cap_{l=L}^\infty \mathcal F_l$ and, for  $l \in \mathbb N$,  $\mathcal F_l=\{\omega: \X_l(\omega) \;\hbox{is entirely  classified}\}$.
\end{proposition}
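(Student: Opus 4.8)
The plan is to work on the full‑probability event on which the covering property H3 holds for both $a=0,1$ and no sample point $X_i$ lies on $\eta^{-1}(1/2)$ (the latter has probability one by H1). I would fix such an $\omega$ and first record the bookkeeping fact that each executed step of the algorithm removes exactly one point from $\unclass_{j-1}$ and adds it to $\Z_j$; consequently, if $\X_l$ is not entirely classified, then at some step $j_0$ one has $\unclass_{j_0-1}\neq\emptyset$ while $\unclass_{j_0-1}(h_l)=\emptyset$, i.e. every still‑unlabelled point sits at distance $\ge h_l$ from the current labelled set. Abbreviate $\Z=\Z_{j_0-1}$, $\unclass=\unclass_{j_0-1}$, and note $\X^n=\Z_0\subseteq\Z$, so the training points $X_0^\ast,X_1^\ast$ of H4 belong to $\Z$ at every step.

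The main lemma I would prove is a connectivity statement: for $l$ large enough (so that $\mathcal I_{a,l}$ holds), the points of $\X_l\cap I_a$ form a connected graph $G_a$ under the adjacency relation ``$\|X-X'\|<h_l$''. The argument is topological: if $V_1\sqcup V_2=\X_l\cap I_a$ were a splitting into nonempty, mutually non‑adjacent classes, then $U_k:=\bigcup_{X\in V_k}B(X,h_l/2)$ would be nonempty, open and disjoint (two radius‑$h_l/2$ balls meet iff their centres are at distance $<h_l$), while $\mathcal I_{a,l}$ gives $I_a\subseteq U_1\cup U_2$ and each $U_k$ meets $I_a$ (it contains the centres in $V_k\subseteq I_a$); this contradicts the connectedness of $I_a$ from H2 i). I expect this to be the crux, since it is the step converting the set‑covering hypothesis H3 into a combinatorial reachability property of the actual sample.

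With $G_a$ connected, I would close the contradiction as follows. If some point of $\X_l$ is unclassified, then (it avoids $\eta^{-1}(1/2)$) it lies in some $I_a$, so $N_a:=(\X_l\cap I_a)\setminus\Z\neq\emptyset$. One also has $C_a:=(\X_l\cap I_a)\cap\Z\neq\emptyset$: applying H3 to $X_a^\ast\in I_a$ yields $X_0\in\X_l\cap I_a$ with $\|X_a^\ast-X_0\|<h_l/2$, and were $X_0$ unlabelled we would get $d(\Z,X_0)\le\|X_a^\ast-X_0\|<h_l$ (since $X_a^\ast\in\Z$), i.e. $X_0\in\unclass(h_l)=\emptyset$, impossible. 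Thus $\X_l\cap I_a=C_a\sqcup N_a$ with both parts nonempty, so by connectedness of $G_a$ there is an edge from some $u\in C_a\subseteq\Z$ to some $v\in N_a\subseteq\unclass$; then $d(\Z,v)\le\|u-v\|<h_l$ forces $v\in\unclass(h_l)$, contradicting $\unclass(h_l)=\emptyset$.

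To finish, I would upgrade ``$l$ large enough'' to the stated a.s. statement: on $\mathcal I_0\cap\mathcal I_1$ there is a random $l_0(\omega)$ beyond which $\mathcal I_{a,l}$ holds for $a=0,1$, hence beyond which $\omega\in\mathcal F_l$; intersecting with $\{X_i\notin\eta^{-1}(1/2)\text{ for all }i\}$ and using $\mathbb P(\mathcal I_a)=1$ (H3) together with H1, this event has probability one and is contained in $\mathcal F=\cup_L\cap_{l\ge L}\mathcal F_l$. Everything outside the connectivity lemma is routine set‑theoretic bookkeeping.
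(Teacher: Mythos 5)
Your proof is correct and follows essentially the same route as the paper: the covering event $\mathcal I_{a,l}$ together with connectedness of $I_a$ (H2 i)) is used to chain, through overlapping $h_l/2$-balls centered at points of $\X_l\cap I_a$, from an already-labelled point of $I_a$ (guaranteed via $X_a^\ast$ from H4) to any unlabelled one, producing a candidate in $\unclass_{j}(h_l)$. Your packaging as connectivity of the $h_l$-proximity graph on $\X_l\cap I_a$, with a contradiction at the first ``stuck'' step, is only a mild reorganization of the paper's step-by-step induction and its two cases (no point of $\X_l\cap I_a$ yet classified vs.\ some classified).
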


\begin{remark} If we estimate $\eta$ with a  $k$-nearest neighbor rule instead of the kernel procedure  proposed in this work, the result presented in  Proposition \ref{alg2} holds with no assumptions.

More generally, in the algorithm $\hat{\eta}_{j-1}(X_i)$ might be replaced by other local nonparametric regression estimator like neural networks. However, among other conditions, the diameter of the partition cells will play an important role. 
The analysis of consistency of the proposed algorithm based on $k$-nn or neural network rules is beyond the scope of this manuscript.
  
\end{remark}

\section{Consistency of the algorithm} \label{seccons}
 
To prove the consistency of the algorithm additional  conditions are required. They  involve  regularity properties of different sets and  the rate at which $h_l$ decreases. Define the following sets,  illustrated in Figure \ref{IAB}: 
\begin{align*}
 A^\delta_0=I_0\ominus B(0,\delta)\;,\quad  A^\delta_1=&I_1\ominus B(0,\delta)\;,\\
 B_0^h=I_0\cap B(I_1,h)\;,\quad  B_1^h=&I_1\cap B(I_0,h)\;.
 \end{align*}
Besides  H1-H4 introduced in Section \ref{alg},  we will also assume that both the $\delta$-interior  $A_0^\delta$ and $A_1^\delta$ of  $I_0$ and $I_1$, respectively,   are connected and \textit{coverable}, as stated in H5. This hypothesis (as we will see in Appendix B) is fulfilled if we assume that the set  $\overline{I_a^c}$, $a=0,1$,  has positive reach, (as introduced in \cite{federer:59}) and $lh_l^d/\log(l)\rightarrow \infty$. Assumption H7 holds if  $lh_l^{2d}/\log(l)\rightarrow \infty$, as it is proved in \cite{ab:89}.   Moreover, the density $f$ of the distribution of  $X$ needs to take  larger values  on the interiors $A_0^\delta\cup A_1^\delta$ than on the borders $B_0^h\cup B_1^h$, as indicated in H6. Finally, all the labels in the training set $\D^n$ must agree with those determined by the Bayes's rule, apart from being well located, as presented  in  H8. Namely, consider the following set of hypotheses, which  will be  discussed  in Section \ref{assump}:

\begin{itemize}
			\item[H5.] There exists $\delta_0>0$ such that, for $a=0,1$ and  for any $\delta<\delta_0$, $i)$ $A_a^\delta$ is connected, and $ii)$ the probability event $\mathcal A_a^\delta$ fulfills  $\mathbb{P}( \mathcal A_a^\delta)=1$,  where   
	\begin{eqnarray}
	\label{A_set}
	 \mathcal A_a^\delta=\bigcup_{l_0}\bigcap_{l\geq l_0}\mathcal A_{a,l}^\delta \hbox{ and }  
			\mathcal A^\delta_{a,l}=\left\{\omega\in \Omega: A_a^\delta\subseteq \bigcup_{X\in \X_{l}\cap A_a^\delta } B(X,h_l/2)\right\},
		\end{eqnarray}
for $l\in \mathbb N$.

	\item[H6.] The Valley Condition: The  probability function $P_X$ induced by $X$ has a density $f$ verifying that $\delta_1>0$ exists such that for all $\delta<\delta_1$ there is $\gamma=\gamma(\delta)>0$, such that when  $h<\delta$
\begin{equation} \label{h4}
f(a)-f(b)>\gamma\;,  \text{ for all } { a\in A_0^{\delta}\cup A_1^{\delta }}\text{ and all } {b \in  B_1^{h}\cup B^{h}_0}.
\end{equation}

		\item[H7.] The kernel density estimator $\hat f_l (u)=(\omega_dlh^d)^{-1}\sum_{i=1}^l \mathbb{I}_{B(u,h_l)}(X_i)$ converges to $f(u)$ uniformly  over its support $S$, almost surely: 
			\begin{equation}
			\label{conv_unif}
			\mathbb{P}\left( \bigcup_{l_0}\bigcap_{l\geq l_0} \sup_{u \in S}|\hat f_l(u)-f(u)|<\varepsilon \right)=1\;,  \forall \varepsilon>0. 
			\end{equation}

	\item[H8.] 	Good training set:  $Y^i=g^*(X^i)$ for all $(X^i,Y^i)\in \D^n$; moreover, there exists $X_a^\ast$ in $\D^n$ such that $X_a^n\in A_a^{\delta_2}$, for $a=0,1$, for some $\delta_2>0$. Observe that H8 implies H4.

\end{itemize}

Even if no condition is imposed on the bandwidth $h_l$, the algorithm implicitly assumes  that it converges to zero. Indeed, in Proposition \ref{mariela}, we ask for  rates of convergence to guarantee the validity of condition H3, H5 and H7, aside from
 some regularity conditions on $f$ and the sets $I_a$ for $a=0,1$.\\
 Following the notation in \cite{federer:59}, let $\text{Unp}(S)$ be the set of points $x\in \mathbb{R}^d$ with a unique projection on $S$, denoted by $\pi_S(x)$. That is, for $x\in \text{Unp}(S)$, $\pi_S(x)$ is the unique point that achieves the minimum of $\|x-y\|$ for $y\in S$. For $x\in S$, let {\it{reach}}$(S,x)=\sup\{r>0:B(x,r)\subset \text{Unp}(S)\big\}$. The reach of $S$ is defined by $\text{reach}(S)=\inf\big\{\text{reach}(S,x):x\in S\big\},$ and $S$ is said to be of positive reach if $\text{reach}(S)>0$.

\begin{proposition} \label{mariela} Assume that H2 i) and ii) hold and that  $f$ is compact supported, continuous, bounded from below by a positive constant. Assume also that $reach(\overline{I_a^c})>0$, for $a=0,1$. The bandwidth $h_l$ fulfills  $h_l\rightarrow 0$ and $lh_l^{2d}/\log(l)\rightarrow \infty$.
	 Then H3, H5 and H7 hold.
	\end{proposition}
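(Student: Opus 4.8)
The plan is to handle the three conclusions separately. The claim H7 I would obtain essentially by citation: $f$ is continuous with compact support, hence uniformly continuous and bounded, and $lh_l^{2d}/\log l\to\infty$ is precisely the bandwidth condition under which \cite{ab:89} establishes almost-sure uniform consistency of the uniform-kernel density estimator, i.e.\ $\sup_{u\in S}|\hat f_l(u)-f(u)|\to 0$ a.s., which is \eqref{conv_unif}. The claims H3 and H5 are then two instances of one ``coverability'' argument, whose input is a geometric estimate coming from the positive-reach hypothesis.

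Write $\rho_a=\mathrm{reach}(\overline{I_a^c})>0$ and $\delta_0=\tfrac12\min\{\rho_0,\rho_1\}$. Before the probabilistic step I would record two consequences of Federer's theory of sets of positive reach \cite{federer:59}. First, an \emph{inner-ball estimate}: there is $c_0=c_0(d)>0$ such that $\mu_L\big(B(x,r)\cap I_a\big)\ge c_0 r^d$ for every $a$, every $x\in I_a$, and every $r$ below a threshold depending on $\rho_a$. One proves this by cases: if $d(x,\overline{I_a^c})\ge r$ then $B(x,r)\subseteq I_a$; otherwise, with $y$ the (unique) metric projection of $x$ onto $\overline{I_a^c}$, sliding $x$ outward along the ray from $y$ to a point $x'$ at distance slightly below $\rho_a$ from $\overline{I_a^c}$ keeps the projection equal to $y$ (constancy of the projection along short segments), so $B(x',d(x',\overline{I_a^c}))\subseteq \mathrm{int}(I_a)$; this ball passes essentially through $x$, and for $r$ small its intersection with $B(x,r)$ contains at least a fixed fraction of $B(x,r)$. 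Second, \emph{stability of the inner parallel sets}: for $\delta<\delta_0$ the erosion $A_a^\delta=I_a\ominus B(0,\delta)$ is connected (I would argue it is a deformation retract of $I_a$, which is connected by H2~i)) and satisfies $\mathrm{reach}\big(\overline{(A_a^\delta)^c}\big)\ge\rho_a-\delta\ge\rho_a/2$, so the inner-ball estimate holds for $A_a^\delta$ as well with constants uniform in $\delta<\delta_0$; that $A_a^\delta\neq\varnothing$ follows from $\mu_L(I_a)>0$ (H2~ii)). I expect this second fact — connectedness and reach control for the erosion — to be the main obstacle; the inner-ball estimate is routine once the definition of reach is unwound, but the topology and reach of $A_a^\delta$ genuinely require the parallel-set/tube machinery and are the kind of thing that belongs in the appendix.

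With this geometry, H3 and H5 follow from a net/Borel--Cantelli argument; I describe it for H3, i.e.\ for the set $I_a$, with H5 being identical after replacing $I_a$ by $A_a^\delta$ ($\delta<\delta_0$) and using the uniform-in-$\delta$ constants. Since $I_a\subseteq S$ and $f\ge f_{\min}>0$ on $S$, the inner-ball estimate gives $P_X\big(B(x,r)\cap I_a\big)\ge f_{\min}c_0 r^d$ for all $x\in I_a$ and small $r$. Take $\{z_1,\dots,z_{N_l}\}$ a maximal $(h_l/4)$-separated subset of $I_a$; then $I_a\subseteq\bigcup_k B(z_k,h_l/4)$, and disjointness of the balls $B(z_k,h_l/8)$ together with the mass bound force $N_l\le C_a h_l^{-d}$. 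If $\mathcal I_{a,l}$ fails there is $x^\ast\in I_a$ with $B(x^\ast,h_l/2)\cap I_a\cap\X_l=\varnothing$; choosing $k$ with $x^\ast\in B(z_k,h_l/4)$ yields $B(z_k,h_l/4)\subseteq B(x^\ast,h_l/2)$, so that ball too contains no point of $\X_l\cap I_a$. By the union bound and the i.i.d.\ structure of $\X_l$,
\[
\mathbb P\big(\mathcal I_{a,l}^{\,c}\big)\ \le\ N_l\,\big(1-f_{\min}c_0 4^{-d} h_l^d\big)^l\ \le\ C_a\,h_l^{-d}\,\exp\!\big(-f_{\min}c_0 4^{-d}\,l h_l^d\big),
\]
and since $h_l\to 0$ the hypothesis $lh_l^{2d}/\log l\to\infty$ forces $lh_l^{d}/\log l\to\infty$, which makes the right-hand side summable in $l$. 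Borel--Cantelli then gives $\mathbb P(\mathcal I_a)=\mathbb P\big(\bigcup_{l_0}\bigcap_{l\ge l_0}\mathcal I_{a,l}\big)=1$, which is H3; running the same estimate with $A_a^\delta$ in place of $I_a$ for each $\delta<\delta_0$ — legitimate by the second geometric fact, which also supplies the connectedness in H5~i) — gives $\mathbb P(\mathcal A_a^\delta)=1$, i.e.\ H5.
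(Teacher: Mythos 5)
Your proposal is correct in substance, but it reaches H3 and H5 by a genuinely different probabilistic route than the paper. For H7 you both simply invoke \cite{ab:89}, and your geometric ingredients are the same facts the paper isolates in its Appendix B: the inner-ball (standardness) estimate that positive reach of $\overline{I_a^c}$ forces, obtained in the paper via Lemma \ref{lemest} (citing Federer's Corollary 4.9 and Proposition 1 of \cite{chola}) rather than by your hand-rolled projection/sliding argument, and the connectedness plus reach control of the erosions $A_a^\delta$, which the paper proves in Lemma \ref{lemconec} through the continuity of the metric projection onto $\partial(I\ominus B(0,\eps))$ (Federer, Theorem 4.8) --- essentially the retraction you only assert when you say ``deformation retract,'' so your main acknowledged obstacle is exactly what that lemma supplies. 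Where you diverge is the covering step: you prove the events $\mathcal I_{a,l}$ and $\mathcal A^\delta_{a,l}$ hold eventually by a maximal-separated-net plus union bound plus Borel--Cantelli argument, needing only $lh_l^{d}/\log l\to\infty$ and the i.i.d.\ structure of $\X_l$; the paper instead derives the covering from the Hausdorff-distance rate for samples of standard compact supports (Theorem \ref{cuevas}, Cuevas--Rodr\'{\i}guez-Casal) applied through Lemma \ref{lemcub}, which requires a stopping-time construction to restrict $\X_l$ to the subset $Q$ and a law-of-large-numbers count of the visits to $Q$, plus boundary-negligibility facts ($P_X(\partial I_a)=0$, $P_X(\partial A_a^\delta)=0$, via \cite{cue12}, \cite{chola:14} and Erd\"os) because the cited theorem covers the closures $\overline{I_a}$, $\overline{A_a^\delta}$. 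Your version is more elementary and self-contained and avoids both the restriction-to-$Q$ device and the boundary-measure-zero step, since your union bound runs over a deterministic net inside $I_a$ (resp.\ $A_a^\delta$) itself; the paper's version buys brevity by outsourcing the chaining to known support-estimation results and keeps explicit constants ($\beta=f_0/(3\nu(Q))$) available for reuse. Two small points to tighten if you write this up: justify the identification of $\overline{(A_a^\delta)^c}$ with the closed $\delta$-dilation of $\overline{I_a^c}$ before applying the reach inequality, and replace the appeal to $\mu_L(I_a)>0$ for non-emptiness of $A_a^\delta$ by the inner-ball property (or note that the covering and connectedness statements are vacuous when $A_a^\delta=\emptyset$).
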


The main result of this work is presented in Theorem \ref{teoconst}; it states 
 that  the  algorithm proposed in Section \ref{alg} is consistent, in the sense defined in (\ref{consistency}). 
To prove this result, we will invoke the following preliminary lemmas. 
 The first of them, Lemma \ref{lem1}, establishes that  the first point classified differently from the Bayes rule is in the boundary region $B_1^h\cup B_0^h$. Then, in Lemma \ref{lemaux}, we combine the  valley condition with the uniform consistency of the kernel estimator to show that, asymptotically, there are more points of $\X_l$ in 
$A_0^\delta\cup A_1^\delta$ than in $B_0^{h_l}\cup B_1^{h_l}$. Lemma  \ref{lem2} states that 
all the points far enough from the boundary region are labeled by the algorithm, with the same label that the one given by the Bayes rule. To be more precise, 
recall that,  $\mathcal F_l=\{\omega: \X_l(\omega) \quad\hbox{is entirely  classified}\}$ and define 
$$\mathcal B_l=\{\omega: \text{ there exists } X_{i_j}\in \mathcal{X}_l: \tilde{Y}_{i_j}\neq g^*(X_{i_j})\}\cap \mathcal F_l. $$
Look at the first time, $j_{bad}$, where the algorithm  assigns a label different from that prescribed by the Bayes rule, if such a step exists; otherwise, define $j_{bad}=\infty$. Namely,     
\begin{equation}
\label{elijo_mal}
j_{bad}=\inf\{j: \tilde Y_{i_j}\not=g^*(X_{i_j})\}\quad\hbox{on $\mathcal B_l$,}\quad \hbox{ and $j_{bad}=\infty$ on $\mathcal B_l^c$.}
\end{equation}
From now on, we will say that a point  $X_{i_j}\in \X_l$ is  \textit{badly classified} whenever  $\tilde Y_{i_j}\not=g^*(X_{i_j})$; otherwise the point will be called well classified.  The next result establishes that  $X_{i_{j_{bad}}}$ is in $B_0^{h_l}\cup B_1^{h_l}$.

\begin{lemma} \label{lem1} Assume that H1 and H8 hold. Then,  $\mathcal B_l \subset \{X_{i_{j_{bad}}}\;\in\; B_0^{h_l}\cup B_1^{h_l}\}.$
\end{lemma}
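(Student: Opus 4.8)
The plan is to argue by contradiction, analyzing the very first step $j_{bad}$ at which the algorithm misclassifies. On the event $\mathcal B_l$, all points classified before step $j_{bad}$ agree with the Bayes rule by the minimality of $j_{bad}$, and by H8 all the original labels in $\D^n$ also agree with $g^*$. Hence, at step $j_{bad}$, every labeled point $(X_r,\tilde Y_r)$ available to the estimator $\hat\eta_{j_{bad}-1}$ (whether coming from $\D^n$ or from $\T_{j_{bad}-1}\setminus\D^n$) satisfies $\tilde Y_r=g^*(X_r)=\mathbb{I}_{I_1}(X_r)$, using H1 to discard the null set $\eta^{-1}(1/2)$. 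The key observation is then: if $X_{i_{j_{bad}}}$ lies in $I_1$ but is \emph{not} within distance $h_l$ of $I_0$ (i.e.\ $X_{i_{j_{bad}}}\notin B_1^{h_l}$), then the ball $B(X_{i_{j_{bad}}},h_l)$ is entirely contained in $I_1$, so every labeled point inside that ball carries label $1$; consequently $\hat\eta_{j_{bad}-1}(X_{i_{j_{bad}}})=1$ and $\tilde Y_{i_{j_{bad}}}=\mathbb{I}_{\{\hat\eta\ge 1/2\}}=1=g^*(X_{i_{j_{bad}}})$, contradicting that the point is badly classified. The symmetric argument handles $X_{i_{j_{bad}}}\in I_0\setminus B_0^{h_l}$, giving $\hat\eta_{j_{bad}-1}=0$ and hence $\tilde Y_{i_{j_{bad}}}=0=g^*(X_{i_{j_{bad}}})$.

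Carrying this out, the steps in order are: (1) restrict to $\mathcal B_l$ and invoke H1 to write $S=I_0\cup I_1$ up to a $P_X$-null set, so that $g^*(x)=\mathbb{I}_{I_1}(x)$ on the support; (2) use the definition \eqref{elijo_mal} of $j_{bad}$ together with H8 to conclude that every label fed into \eqref{eta_i} at step $j_{bad}-1$ equals the Bayes label of the corresponding feature point; (3) suppose for contradiction that $X_{i_{j_{bad}}}\in I_1\setminus B_1^{h_l}$, observe $B(X_{i_{j_{bad}}},h_l)\cap S\subset I_1$, deduce that both sums in the numerator and denominator of \eqref{eta_i} only collect label-$1$ points, so $\hat\eta_{j_{bad}-1}(X_{i_{j_{bad}}})=1$ (the ratio is well-defined because $X_{i_{j_{bad}}}\in\unclass_{j_{bad}-1}(h_l)$ forces at least one previously-labeled point in the ball), hence $\tilde Y_{i_{j_{bad}}}=1$, contradiction; (4) repeat symmetrically for $I_0\setminus B_0^{h_l}$; (5) conclude $X_{i_{j_{bad}}}\in B_0^{h_l}\cup B_1^{h_l}$.

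The main subtlety — rather than a hard obstacle — is bookkeeping around which points actually enter the estimator: one must be careful that $\hat\eta_{j_{bad}-1}$ is built only from $\D^n$ and points classified \emph{strictly before} step $j_{bad}$, all of which are correctly labeled, and that the denominator is nonzero precisely because $X_{i_{j_{bad}}}$ was eligible, i.e.\ $d(\Z_{j_{bad}-1},X_{i_{j_{bad}}})<h_l$, so some labeled point sits inside $B(X_{i_{j_{bad}}},h_l)$. A second minor point is that a feature point $X_r$ used in the estimator could in principle lie in $\eta^{-1}(1/2)$; this is handled by H1, which makes that set $P_X$-null, hence almost surely no sampled point falls in it, so identifying $g^*$ with $\mathbb{I}_{I_1}$ on the observed data is legitimate. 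Everything else is immediate from the structure of the update rule.
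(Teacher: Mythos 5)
Your proposal is correct and follows essentially the same route as the paper's proof: argue by contradiction at the first misclassification step, use H8 and the minimality of $j_{bad}$ to ensure all labels available to $\hat\eta_{j_{bad}-1}$ agree with $g^*$, and note that if $X_{i_{j_{bad}}}\notin B_0^{h_l}\cup B_1^{h_l}$ the ball $B(X_{i_{j_{bad}}},h_l)$ misses the opposite class, so the majority rule forces the Bayes label. Your additional bookkeeping (nonempty denominator via eligibility, H1 excluding $\eta^{-1}(1/2)$ from the sample) only makes explicit what the paper leaves implicit.
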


\begin{lemma} \label{lemaux} Assume H6 and H7. Then, 
$\mathbb{P}(\mathcal V^\delta)=1$, for any $\delta<\delta_1$ where 
	\begin{equation*}\label{eqlem}
	\mathcal V^\delta=\bigcup_{l_0}\bigcap_{l\geq l_0} \mathcal{V}_l^\delta 
\quad\hbox{and}	
	\end{equation*}
	$$	\mathcal{V}_l^\delta=\left\{\omega\in \Omega:\inf_{a\in A_0^\delta\cup A_1^\delta }\sum_{i=1}^l \mathbb{I}_{B(a,h_l)}(X_i)\;\geq \;\sup_{b\in  B_0^{h_l}\cup B_1^{h_l} }\sum_{i=1}^l \mathbb{I}_{B(b,h_l)}(X_i)\right\}.$$
	\end{lemma}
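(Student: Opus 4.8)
The plan is to work on the almost-sure event where the uniform convergence in H7 holds, and to exploit the uniform gap provided by the Valley Condition H6. Fix $\delta<\delta_1$ and let $\gamma=\gamma(\delta)>0$ be the constant furnished by H6, so that $f(a)-f(b)>\gamma$ for every $a\in A_0^\delta\cup A_1^\delta$ and every $b\in B_0^h\cup B_1^h$ whenever $h<\delta$. Since H7 gives $\sup_{u\in S}|\hat f_l(u)-f(u)|\to 0$ a.s., choose $\varepsilon=\gamma/3$ and work on the event $\mathcal E_\varepsilon=\bigcup_{l_0}\bigcap_{l\geq l_0}\{\sup_{u\in S}|\hat f_l(u)-f(u)|<\varepsilon\}$, which has probability one. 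On $\mathcal E_\varepsilon$ there is a (random) $l_0$ such that for all $l\geq l_0$ and all $u\in S$,
\begin{equation*}
f(u)-\varepsilon<\hat f_l(u)=\frac{1}{\omega_d l h_l^d}\sum_{i=1}^l\mathbb{I}_{B(u,h_l)}(X_i)<f(u)+\varepsilon.
\end{equation*}

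Next I would translate the gap on $f$ into a gap on the counts. Since $h_l\to 0$, enlarge $l_0$ if necessary so that $h_l<\delta$ for all $l\geq l_0$; then H6 applies with $h=h_l$. For any $a\in A_0^\delta\cup A_1^\delta$ and any $b\in B_0^{h_l}\cup B_1^{h_l}$ we get, for $l\geq l_0$,
\begin{align*}
\frac{1}{\omega_d l h_l^d}\sum_{i=1}^l\mathbb{I}_{B(a,h_l)}(X_i)-\frac{1}{\omega_d l h_l^d}\sum_{i=1}^l\mathbb{I}_{B(b,h_l)}(X_i)
&>\big(f(a)-\varepsilon\big)-\big(f(b)+\varepsilon\big)\\
&=f(a)-f(b)-2\varepsilon>\gamma-2\varepsilon=\tfrac{\gamma}{3}>0,
\end{align*}
and multiplying through by $\omega_d l h_l^d>0$ yields $\sum_{i=1}^l\mathbb{I}_{B(a,h_l)}(X_i)>\sum_{i=1}^l\mathbb{I}_{B(b,h_l)}(X_i)+\tfrac{\gamma}{3}\omega_d l h_l^d$. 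Taking the infimum over $a$ and the supremum over $b$ (the strict gap is uniform in $a,b$, so it survives passing to $\inf$ and $\sup$) gives, for every $l\geq l_0$,
\begin{equation*}
\inf_{a\in A_0^\delta\cup A_1^\delta}\sum_{i=1}^l\mathbb{I}_{B(a,h_l)}(X_i)\;\geq\;\sup_{b\in B_0^{h_l}\cup B_1^{h_l}}\sum_{i=1}^l\mathbb{I}_{B(b,h_l)}(X_i),
\end{equation*}
which is exactly the event $\mathcal V_l^\delta$. Hence $\mathcal E_\varepsilon\subset\bigcup_{l_0}\bigcap_{l\geq l_0}\mathcal V_l^\delta=\mathcal V^\delta$, and since $\mathbb{P}(\mathcal E_\varepsilon)=1$ we conclude $\mathbb{P}(\mathcal V^\delta)=1$.

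A couple of points deserve care rather than real difficulty. First, one must make sure the sets $B_0^{h_l}\cup B_1^{h_l}$ and $A_0^\delta\cup A_1^\delta$ are nonempty for large $l$ (otherwise the $\inf$/$\sup$ are vacuous and the inequality holds trivially); nonemptiness of $A_a^\delta$ for $\delta<\delta_0$ follows from H5 i) together with H2, and if some $B_a^{h_l}$ is eventually empty the corresponding supremum only drops, so the bound is unaffected. Second, H6 is stated for points $b\in B_1^h\cup B_0^h$ with $h<\delta$; since $h_l\to 0$ we may simply take $l$ large enough that $h_l<\min\{\delta,\delta_1\}$, so there is no circularity. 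The only genuinely substantive input is H7 — that the uniform almost-sure convergence of the kernel density estimator is available — but we are entitled to assume it, and Proposition \ref{mariela} records sufficient conditions for it. The main (mild) obstacle is thus purely bookkeeping: ensuring that the single random threshold $l_0$ can be chosen to simultaneously enforce $h_l<\delta$, $h_l<\delta_1$, and the $\varepsilon$-closeness in H7, which is immediate since each requirement holds for all sufficiently large $l$.
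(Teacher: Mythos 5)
Your argument is correct and follows essentially the same route as the paper's proof: use H6 to get the uniform gap $\gamma$, use H7 to place the kernel estimator within $\varepsilon<\gamma/2$ of $f$ uniformly, and observe that $\hat f_l$ is exactly the normalized ball count, so the gap transfers to the counts for all $l$ large enough that $h_l<\delta$. Your extra remarks on nonemptiness and on choosing a single threshold $l_0$ are harmless bookkeeping that the paper leaves implicit.
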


\begin{lemma} \label{lem2}  Assume H1--H8. Then,  for any $\delta<\min\{\delta_0,\delta_1,\delta_2\}$ 
			\begin{equation}
			\label{prop_10_new}
		\mathcal F_l \cap \mathcal A_{a,l}^\delta \cap \mathcal V^\delta_l \;\subset\; \left\{\X_l\cap A_a^\delta \cap (\Z_{j_{bad}-1})^c=\emptyset  \right\}\;, \quad a=0,1, 
			\end{equation}
and therefore, on $\mathcal F_l \cap \mathcal A_{0,l}^\delta \cap \mathcal A_{1,l}^\delta \cap \mathcal V^\delta_l$, we have that 
\begin{equation}   
\label{eq4}
\mathbb{I}_{\tilde Y_{i}= g^*(X_{i})}\geq \mathbb{I}_{A_0^\delta\cup A_1^\delta}(X_{i})\;,\quad i=1, \ldots, l.
\end{equation}
 
\end{lemma}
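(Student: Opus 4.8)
The plan is to derive \eqref{eq4} from \eqref{prop_10_new}, and to prove \eqref{prop_10_new} by contradiction: Lemma \ref{lem1} locates the first mistake in the boundary strip, the covering hypotheses (H5 i) together with $\mathcal A_{a,l}^\delta$) and the seed point of H8 keep an unlabelled interior point within reach of the algorithm, and the valley condition $\mathcal V_l^\delta$ shows that such an interior point would have been labelled before any mistake. For the reduction: on $\mathcal F_l\cap\mathcal A_{0,l}^\delta\cap\mathcal A_{1,l}^\delta\cap\mathcal V_l^\delta$, \eqref{prop_10_new} applied with $a=0$ and $a=1$ says that every $X_i\in\X_l$ lying in $A_0^\delta\cup A_1^\delta$ belongs to $\Z_{j_{bad}-1}$, hence is labelled at a step strictly before $j_{bad}$; by the definition \eqref{elijo_mal} of $j_{bad}$ it is then well classified, so $\mathbb{I}_{\tilde Y_i=g^*(X_i)}=1\ge\mathbb{I}_{A_0^\delta\cup A_1^\delta}(X_i)$, while when $X_i\notin A_0^\delta\cup A_1^\delta$ the right-hand side is $0$ and \eqref{eq4} is trivial. (When $j_{bad}=\infty$, i.e. on $\mathcal B_l^c$, \eqref{prop_10_new} is immediate because $\Z_{j_{bad}-1}\supseteq\X_l$ on $\mathcal F_l$.)

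To prove \eqref{prop_10_new}, fix $a$ and $\delta<\min\{\delta_0,\delta_1,\delta_2\}$, take $l$ large enough that $h_l<\delta$ (possible since $h_l\to 0$), work on $\mathcal B_l\cap\mathcal A_{a,l}^\delta\cap\mathcal V_l^\delta$, and assume for contradiction that some $X_i\in\X_l\cap A_a^\delta$ is still unlabelled at the start of step $j_{bad}$, i.e. $X_i\in\unclass_{j_{bad}-1}$. By H8 there is $X_a^\ast\in\D^n$ with $X_a^\ast\in A_a^{\delta_2}\subseteq A_a^\delta$, and $X_a^\ast\in\Z_0\subseteq\Z_{j_{bad}-1}$. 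By $\mathcal A_{a,l}^\delta$ the connected set $A_a^\delta$ is covered by the open balls $B(X,h_l/2)$, $X\in\X_l\cap A_a^\delta$; two intersecting such balls have centres at distance $<h_l$, and since the $1$-skeleton of the nerve of a ball-cover of a connected set is connected, the graph on $\{X_a^\ast\}\cup(\X_l\cap A_a^\delta)$ with edges joining points at distance $<h_l$ is connected. It contains the labelled vertex $X_a^\ast$ and the unlabelled vertex $X_i$, hence it has an edge from some $Z'\in\Z_{j_{bad}-1}$ to some $Z\in(\X_l\cap A_a^\delta)\setminus\Z_{j_{bad}-1}$; then $d(\Z_{j_{bad}-1},Z)\le\|Z-Z'\|<h_l$, so $Z\in\unclass_{j_{bad}-1}(h_l)$ is a candidate at step $j_{bad}$.

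Since $Z\in A_a^\delta$ and $h_l<\delta$ we have $B(Z,h_l)\subseteq I_a$, so $g^*\equiv a$ on $B(Z,h_l)$; every point of $\D^n$ in that ball has label $a$ (H8), and every point of $\T_{j_{bad}-1}\setminus\D^n$ in it was labelled before step $j_{bad}$, hence — by minimality of $j_{bad}$ — well classified, so also has label $a$; at least one such point exists because $d(\Z_{j_{bad}-1},Z)<h_l$. Therefore $\hat{\eta}_{j_{bad}-1}(Z)=a\in\{0,1\}$, $\max\{\hat{\eta}_{j_{bad}-1}(Z),1-\hat{\eta}_{j_{bad}-1}(Z)\}=1$ is the largest value attainable in \eqref{ji}, and labelling $Z$ would give $\tilde Y=\mathbb{I}_{\{\hat{\eta}_{j_{bad}-1}(Z)\ge 1/2\}}=a=g^*(Z)$. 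Consequently the point $X_{i_{j_{bad}}}$ actually selected also attains $1$ in \eqref{ji}, so the tie-break \eqref{card} gives $\#\{\X_l\cap B(X_{i_{j_{bad}}},h_l)\}\ge\#\{\X_l\cap B(Z,h_l)\}$. But Lemma \ref{lem1} puts $X_{i_{j_{bad}}}$ in $B_0^{h_l}\cup B_1^{h_l}$ while $Z\in A_0^\delta\cup A_1^\delta$, so on $\mathcal V_l^\delta$
\begin{equation*}
\#\{\X_l\cap B(Z,h_l)\}\ \ge\ \inf_{u\in A_0^\delta\cup A_1^\delta}\sum_{k=1}^l\mathbb{I}_{B(u,h_l)}(X_k)\ \ge\ \sup_{v\in B_0^{h_l}\cup B_1^{h_l}}\sum_{k=1}^l\mathbb{I}_{B(v,h_l)}(X_k)\ \ge\ \#\{\X_l\cap B(X_{i_{j_{bad}}},h_l)\}.
\end{equation*}
Hence $Z$ maximises \eqref{card} among the maximisers of \eqref{ji}, yet labelling $Z$ respects the Bayes rule while labelling $X_{i_{j_{bad}}}$ does not, so the algorithm could not have selected $X_{i_{j_{bad}}}$ at step $j_{bad}$ — the desired contradiction. (To make this air-tight one uses the strict form of the valley inequality, available for large $l$ from the gap $f(a)-f(b)>\gamma$ of H6 together with the uniform convergence H7, which turns the first ``$\ge$'' above into ``$>$'' and removes $X_{i_{j_{bad}}}$ from \eqref{card}.)

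The main obstacle is precisely this passage from the topological inputs — connectedness of $A_a^\delta$, its covering by $h_l/2$-balls centred at sample points, and the interior training seed of H8 — to the operational statement that the reachable frontier $\unclass_{j-1}(h_l)$ keeps meeting $A_a^\delta$ until that region is exhausted; together with the bookkeeping of ties in \eqref{ji}--\eqref{card}, where the weak inequality defining $\mathcal V_l^\delta$ must in fact be upgraded to a strict one through H6--H7.
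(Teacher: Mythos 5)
Your proposal is correct and follows essentially the same route as the paper's proof: locate the first badly classified point in $B_0^{h_l}\cup B_1^{h_l}$ via Lemma \ref{lem1}, use the covering event $\mathcal A_{a,l}^\delta$, the connectedness of $A_a^\delta$ and the seed $X_a^\ast$ of H8 to produce an unlabelled point $Z\in\X_l\cap A_a^\delta$ with $d(Z,\Z_{j_{bad}-1})<h_l$ (your nerve-graph chaining is just a repackaging of the paper's two-case connectivity argument), note $\hat\eta_{j_{bad}-1}(Z)=a$ so $Z$ attains the maximum in \eqref{ji}, and invoke $\mathcal V_l^\delta$ on the counts in \eqref{card} to contradict the selection of $X_{i_{j_{bad}}}$. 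Your closing remark about upgrading the weak inequality of $\mathcal V_l^\delta$ to a strict one via H6--H7 to rule out a tie in \eqref{card} is a legitimate refinement of a point the paper's own proof passes over silently, so it is not a deviation in approach.
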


 	
\begin{theorem} \label{teoconst} 
 Assume that   $\mathcal{D}^n$ is a good training set, in the sense that fulfills  H8. 
 Then, under H1--H3, H5--H7,  the algorithm presented in Section \ref{alg} 
 satisfies
 	\begin{eqnarray*}
    		\lim_{l\rightarrow \infty}	\frac{1}{l}\sum_{i=1}^l \mathbb{I}_{g_{{n,l,r(i)}}(\X_l)\neq Y_i} = \mathbb{P}\{g^*(X)\neq Y\}\quad \text{a.s.}  \end{eqnarray*}
and therefore, it is consistent, as defined in \eqref{consistency}. 
   	\end{theorem}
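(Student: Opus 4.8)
The plan is to reduce the statement to the strong law of large numbers, once we know that --- eventually in $l$ --- the algorithm can disagree with the Bayes rule only on points lying in the thin inner collar $(A_0^\delta\cup A_1^\delta)^c$, whose $P_X$-mass can be made arbitrarily small by shrinking $\delta$. Throughout, $\mathcal{D}_n$ is fixed (so that all almost-sure statements are with respect to $\mathcal{D}_l$), and I write $\tilde Y_i$ for the label eventually assigned to $X_i$ by the algorithm, so that $\mathbb{I}_{g_{n,l,r(i)}(\X_l)\neq Y_i}=\mathbb{I}_{\tilde Y_i\neq Y_i}$; recall also that $L^\ast:=\mathbb{P}\{g^\ast(X)\neq Y\}=L(\mathbf{g}^\ast_l)$ for every $l$, by Proposition~\ref{prop0}.

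\emph{Step 1 (a full-probability event).} Fix $\delta<\min\{\delta_0,\delta_1,\delta_2\}$ and set $E^\delta=\mathcal F\cap\mathcal A_0^\delta\cap\mathcal A_1^\delta\cap\mathcal V^\delta$. Proposition~\ref{alg2} (using H1, H2~i), H3 and H4, the last implied by H8) gives $\mathbb{P}(\mathcal F)=1$; H5 gives $\mathbb{P}(\mathcal A_a^\delta)=1$ for $a=0,1$; and Lemma~\ref{lemaux} (using H6 and H7) gives $\mathbb{P}(\mathcal V^\delta)=1$. Hence $\mathbb{P}(E^\delta)=1$, and by the $\bigcup_{l_0}\bigcap_{l\ge l_0}$ form of these events, for every $\omega\in E^\delta$ there is an $L(\omega)<\infty$ with $\omega\in\mathcal F_l\cap\mathcal A_{0,l}^\delta\cap\mathcal A_{1,l}^\delta\cap\mathcal V_l^\delta$ for all $l\ge L(\omega)$. \emph{Step 2 (localising the errors).} On $E^\delta$, for $l\ge L(\omega)$, Lemma~\ref{lem2} (inequality~\eqref{eq4}) gives $\mathbb{I}_{\tilde Y_i=g^\ast(X_i)}\ge\mathbb{I}_{A_0^\delta\cup A_1^\delta}(X_i)$ for all $i\le l$: every badly classified $X_i$ lies in $(A_0^\delta\cup A_1^\delta)^c$. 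Since for each $i$ the indicators $\mathbb{I}_{\tilde Y_i\neq Y_i}$ and $\mathbb{I}_{g^\ast(X_i)\neq Y_i}$ can differ only when $\tilde Y_i\neq g^\ast(X_i)$, we get $\bigl|\mathbb{I}_{\tilde Y_i\neq Y_i}-\mathbb{I}_{g^\ast(X_i)\neq Y_i}\bigr|\le\mathbb{I}_{\tilde Y_i\neq g^\ast(X_i)}\le\mathbb{I}_{(A_0^\delta\cup A_1^\delta)^c}(X_i)$, and summing,
\[
\Bigl|\frac1l\sum_{i=1}^l\mathbb{I}_{\tilde Y_i\neq Y_i}-\frac1l\sum_{i=1}^l\mathbb{I}_{g^\ast(X_i)\neq Y_i}\Bigr|\ \le\ \frac1l\sum_{i=1}^l\mathbb{I}_{(A_0^\delta\cup A_1^\delta)^c}(X_i),\qquad l\ge L(\omega).
\]

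\emph{Step 3 (SLLN and $\delta\downarrow0$).} By the strong law of large numbers applied to the i.i.d.\ sequences $(X_i)_i$ and $(X_i,Y_i)_i$, almost surely $\frac1l\sum_{i=1}^l\mathbb{I}_{(A_0^\delta\cup A_1^\delta)^c}(X_i)\to\mathbb{P}\{X\in(A_0^\delta\cup A_1^\delta)^c\}$ and $\frac1l\sum_{i=1}^l\mathbb{I}_{g^\ast(X_i)\neq Y_i}\to L^\ast$, so the displayed inequality yields, almost surely,
\[
\limsup_{l\to\infty}\Bigl|\frac1l\sum_{i=1}^l\mathbb{I}_{g_{n,l,r(i)}(\X_l)\neq Y_i}-L^\ast\Bigr|\ \le\ \mathbb{P}\{X\in(A_0^\delta\cup A_1^\delta)^c\}.
\]
Intersecting over a sequence $\delta_k\downarrow0$ (discarding only countably many null sets), and noting that $A_a^\delta\uparrow int(I_a)$ as $\delta\downarrow0$, the right-hand side decreases to $\mathbb{P}\{X\in(int(I_0)\cup int(I_1))^c\}$. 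By H1, $\mathbb{P}\{X\in I_0\cup I_1\}=1$, while $(I_0\cup I_1)\setminus(int(I_0)\cup int(I_1))\subseteq\partial I_0\cup\partial I_1$, which is $P_X$-null under the regularity of the level sets (for instance because $\overline{I_a^c}$ having positive reach, the condition invoked in Proposition~\ref{mariela}, forces its boundary to be Lebesgue-null). Hence the right-hand side is $0$, so $\frac1l\sum_{i=1}^l\mathbb{I}_{g_{n,l,r(i)}(\X_l)\neq Y_i}\to L^\ast$ a.s.; since this average is bounded by $1$, bounded convergence then gives $\mathbb{E}_{\mathcal{D}_l}[\cdot]\to L^\ast=L(\mathbf{g}^\ast_l)$, i.e.\ \eqref{consistency}.

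\emph{Main obstacle.} Granting the three lemmas and Proposition~\ref{alg2}, the argument is essentially bookkeeping with almost-sure events plus the SLLN; the one genuinely delicate point is the closing $\delta\downarrow0$ passage --- that the inner collars $A_a^\delta$ exhaust $I_a$ up to a $P_X$-null set --- which is precisely where the geometric regularity of $I_0$ and $I_1$ (null topological boundary) enters. A secondary subtlety, already absorbed into Lemma~\ref{lem2}, is that the error bound must survive the progressive ``contamination'' of the training set: it is not enough that the \emph{first} misclassified point be near the boundary (Lemma~\ref{lem1}); one needs that no point of $A_a^\delta$ is ever misclassified, which is where the valley condition H6 and the uniform density estimate H7 (via Lemma~\ref{lemaux}) are used to guarantee that interior neighbourhoods always outvote boundary ones.
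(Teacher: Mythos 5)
Your argument is correct and follows essentially the same route as the paper's proof: the same full-probability event $\mathcal F_l\cap\mathcal A_{0,l}^\delta\cap\mathcal A_{1,l}^\delta\cap\mathcal V_l^\delta$ built from Proposition \ref{alg2}, H5 and Lemma \ref{lemaux}, the same use of \eqref{eq4} from Lemma \ref{lem2} to confine disagreements with $g^*$ to $(A_0^\delta\cup A_1^\delta)^c$, then the SLLN and the passage $\delta\downarrow 0$, with your single two-sided bound simply merging the paper's separate $\liminf$/$\limsup$ estimates. Your explicit justification of the closing step --- that $P_X$ charges no mass outside $int(I_0)\cup int(I_1)$, which you trace to the positive-reach regularity invoked in Proposition \ref{mariela} --- is in fact more careful than the paper's terse appeal to H2 ii) at the corresponding point.
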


\section{A faster algorithm} \label{faster}

The algorithm given in Section \ref{alg} classifies a few points of $\X_l$ at each step. This can be discouraging when $l$ is too large. In order to overcome this issue,  we will introduce a simple modification that gives rise to a faster procedure in terms of computational time (see Table \ref{tabgau}), at the expense of introducing a small increment in the classification error rate (this increment can be controlled but with computational cost). 

The idea is to pre-process the sample $\mathcal{X}_l$, and \textit{project it} on a grid $G_l$, as we describe in what follows. We can assume, without loss of generality, that $\mathcal{X}_l\cup \mathcal{X}^n\subset (a,b)^d$ with $a<b$.  For $N$ fixed, to be determined by the practitioner, consider $a_i=a+i(b-a)/N$ for $i=0,\ldots,N$. The $N$-grid $G_l$ on $(a,b)^d$ is  determined by the $N^d$ points of the form $\bold{a}=(a_{i_1},\dots,a_{i_d})$ with $i_j\in \{0,\dots,N-1\}$, for  $j=1,\dots,d$. Each point $\bold{a}$ in the grid determines a cell $C_\bold{a}=\prod_{j=1}^d (a_{i_j}, a_{i_j+1}].$

Given $\X_l$, let   ${T}_l$ be  the set of points $\bold{a}$ in the grid $G_l$ whose corresponding cell $C_{\bold{a}}$ intersects $\X_l$; now project (or collapse)  $\X_l $ on  $T_l$, in the sense that the algorithm will be applied to $T_l$ in lieu of $\X_l$. Then, all the points in $\X_l\cap C_{\bold{a}}$ will be classified with the label assigned to $\bold{a}$ by the algorithm.

\section{Examples with simulated and real data} \label{sim}

 In this section we report some numerical results, comparing  the performance of 
 the SSM algorithm presented in Section \ref{alg} and  its faster version, introduced in Section \ref{faster}, 
 with that of other supervised   algorithms. Specifically,   $k$-nearest neighbors  ($k$-nn)   and support vector machines (SVM)  are the supervised techniques used to assign  labels of each element in $\X_l$  on the basis of the training sample $\D^n$.

The classification error rate of each algorithm is computed in 
three scenarios. In the first two, we use artificially generated data, whereas in the last one we employ a real data set. 
The first example compares efficiency of the three algorithms   ($k$-nn, SVM and the SSL algorithm introduced in section \ref{alg}).   The second one shows the effect of the grid size with respect to 
classification error rate  and computational time. The third one is a well known real-data set  where we illustrate the crucial effect of the initial training sample $\mathcal D^n$.

\subsection{A first simulated example}

The joint distribution of $(X,Y)$ is generated as follows: consider first the curve $C$ in the square $[-1,1]^2$,  defined by $C=\{(x,(1/2)\sin(4x)):-1\leq x\leq 1\}$.   All the points in the square that are below $C$ will be labeled with $Y=0$ while   those that are above the curve $C$ will be labeled with $Y=1$. Now, to emulate the valley condition, those points close to $C$  will be chosen with less probability than those far away. To do so, 
	 let $S_1$ and  $S_2$ denote the set of  points in the square which are at $\|\cdot\|_\infty$-distance  larger  / smaller  than $0.2$ from $C$, respectively. 
	 Namely,  $S_1=\{B_{\|\cdot\|_\infty}(C,0.2)\}^c\cap [-1,1]^2$ and $S_2=B_{\|\cdot\|_\infty}(C,0.2)\cap [-1,1]^2$, where $\|\cdot\|_\infty$ is the supremum norm. Let $U_1$, $U_2$ and $B$ be independent random variables, with $U_1\sim \text{Uniform}(S_1)$, $U_2\sim \text{Uniform}(S_2)$ and  $B\sim Bernoulli(7/8)$.
  Consider the random variable $X=BU_1+(1-B)U_2$,  while   $(X,Y)=((X_1,X_2),1)$ if $X_2>(1/2)\sin(4X_1)$ and $(X,Y)=((X_1,X_2),0)$ if $X_2\leq (1/2)\sin(4X_1)$.

We first study the performance of our algorithm,  analyzing  the error rates among 50 replications of the described scheme, with $n=20$, $l=2400$ and $h_l=0.15$.  An histogram of the classification errors is presented on the right panel of Figure \ref{hist} and a  summary is reported   in Table \ref{errej1}. 
There are four out  of the fifty replications where the  classification errors are  much higher than  in the other cases.  
These extreme results can be attributed to the initial training sample $\mathcal D^n$  (see assumption H8). The initial training sample for the best and the worst case (in terms of classification error rate) are also shown on the left panels of  Figure \ref{hist}.

\begin{figure}[h]
	\begin{center}
			\includegraphics[scale=.18]{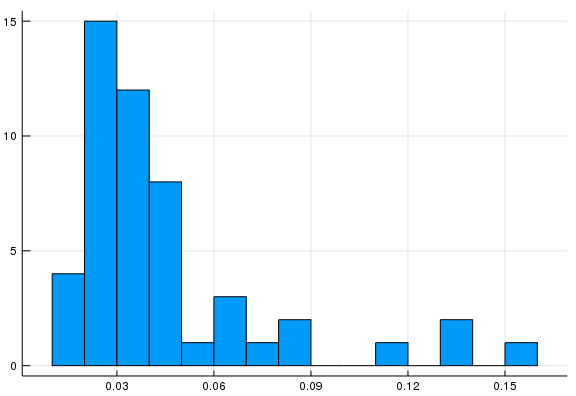}
						\includegraphics[scale=.18]{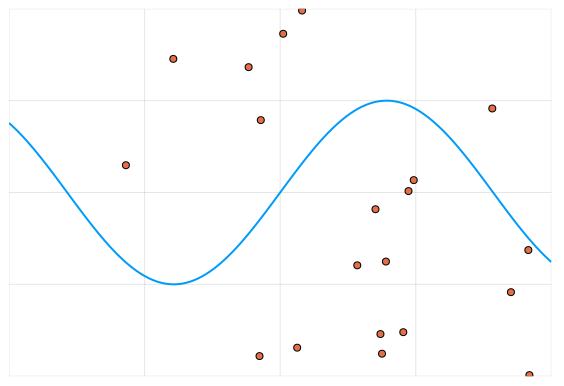}
									\includegraphics[scale=.18]{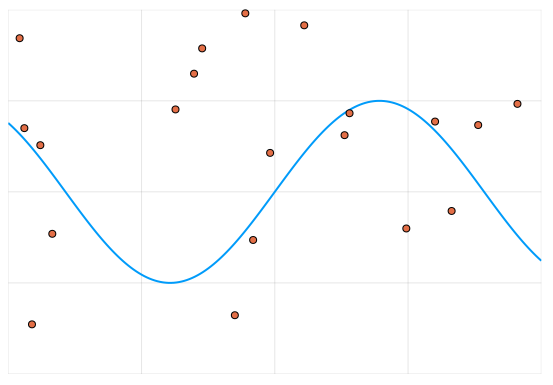}
			\caption{Left panel: histogram of the classification error. Middle panel: Initial training sample in the worst case. Right panel: Initial training sample in the best case.}
			\label{hist} 	
		\end{center}
\end{figure}
\begin{table}[h]
	\footnotesize
	\begin{center}
		\begin{tabular}{cccccc} 
			Min.  & 1st Qu.  & Median &   Mean &3rd Qu.&    Max. \\
			\hline
			0.0179& 0.0267 & 0.0365& 0.0458 & 0.0469 & 0.1554 
		\end{tabular}
		\caption{Summary of the classification error rate over 50 repetitions.}\label{errej1}
	\end{center}
\end{table}


Next, we compare the misclassification error of the semi-supervised methods introduced in this work with that of some  supervised classification algorithms trained with $\mathcal D^n$ to label $\mathcal X_l$.  $k$-nn is, naturally, the first method to be considered. As often happens in the presence of a tuning parameter, the choice of $k$ may impact on the performance of the procedure. In particular, in the present scenario, $k$ will be chosen on the base of  the training set $\mathcal D^n$, with $n=20$, which may turn in an  unstable recipe   to select $k$. 
To analyze the distribution of $\hat k$ in such a situation, we  generated 1000 samples of $\mathcal{D}^{20}$ and  for each of them we computed $\hat{k}$. The results are shown in Figure \ref{hist2}.
\begin{figure}[h]
	\begin{center}
		\includegraphics[scale=.2]{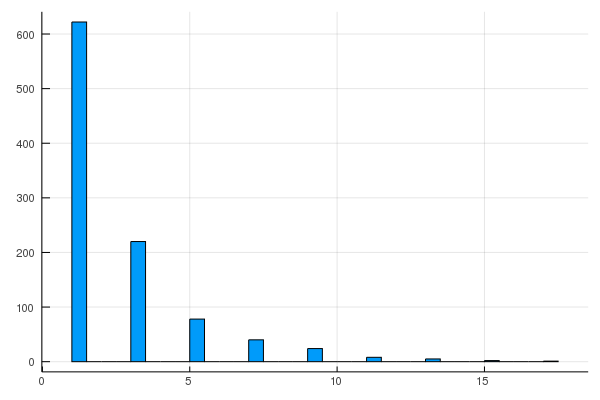}
		\caption{Histogram of values of $k$ chosen by cross validation procedure, over 1000 replications, for model one with $n=20$}
		\label{hist2} 	
	\end{center}
\end{figure}
The three more frequent values of $\hat k$ ($\hat{k}=1,3$ and $5$) were used to classify $\mathcal X_{2400}$  with a $k$-nn method trained with $\mathcal D^{20}$. 
The mean error rates  among 1000 replications  are given in Table \ref{errk}. It is worth to mention that the error rates of SSL and $k$-nn are computed with 50 and 1000 replications, respectively. This is due to computational demands of each procedure.

\begin{table}[h]
	\begin{center}
		\footnotesize
		\begin{tabular}{l|cccccc}
					& Min.  & 1st Qu.& Median & Mean  & 3rd Qu. & Max.  \\ \hline
			$k=1$   & 0.026 & 0.076  & 0.097  & 0.104 & 0.123  & 0.289 \\ 
			$k=3$   & 0.034 & 0.091  & 0.114  & 0.124 & 0.147  & 0.368\\
			$k=5$   & 0.052 & 0.108  & 0.129  & 0.142 & 0.163  & 0.448\\ \hline
		\end{tabular}
		\caption{Summary over 1000 replications of the misclassification error rate to classify $\mathcal{X}_{2400}$ for  the 3 more frequent values of $\hat{k}$.}\label{errk}
	\end{center}
\end{table}

\vspace{-.2cm}

Finally, we kept $n=20$ and vary  $l$, choosing $l=50\times j $, with $j=1,\ldots, 60$. At  each of the $60$ steps, $50$ new unlabelled data are included to conform the set $\mathcal X_l$. Four competitors were considered: $k$-nn with $k$ chosen by cross validation, support vector machine (SVM,  using the package \textsf{LIBSVM} in julia 1.2), both of them trained with $\mathcal D^n$,  and our SSL proposal with $h1=1.7(\log(l)/l)^{1/4}$ and $h2=0.7\times h1$. The whole procedure is repeated $50$ times. In figure \ref{sslinc} we plot the median, mean, 0.25, and 0.75 quantile of the misclassification errors respectively. As expected, the misclassification errors for $k$-nn and SVM remain mainly constant, while the misclassification error of the SSL algorithm decreases with $l$. Observe that for $l$ around 500 the SSL algorithm outperforms its competitors.

\begin{figure}[h]
	\begin{center}
		\includegraphics[scale=.24]{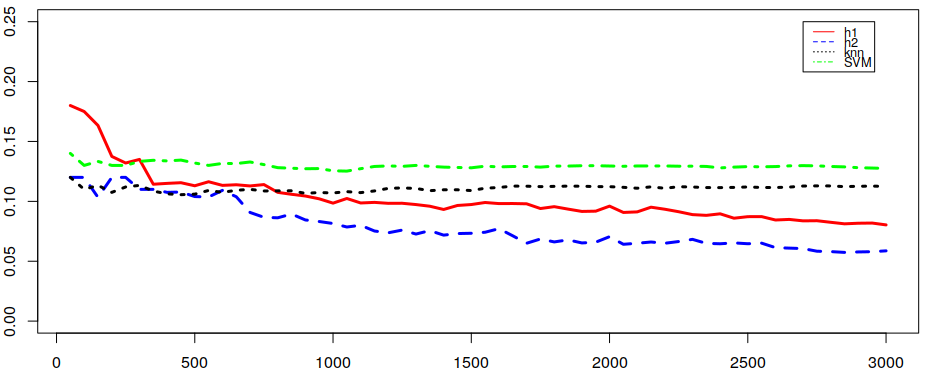}
		\includegraphics[scale=.24]{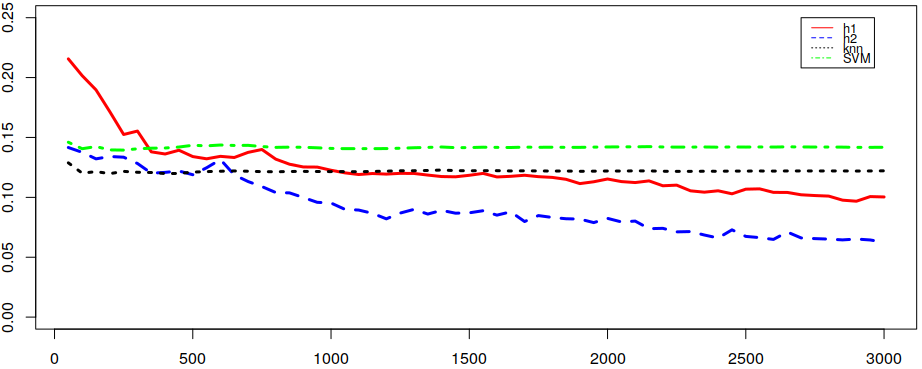}
		\includegraphics[scale=.24]{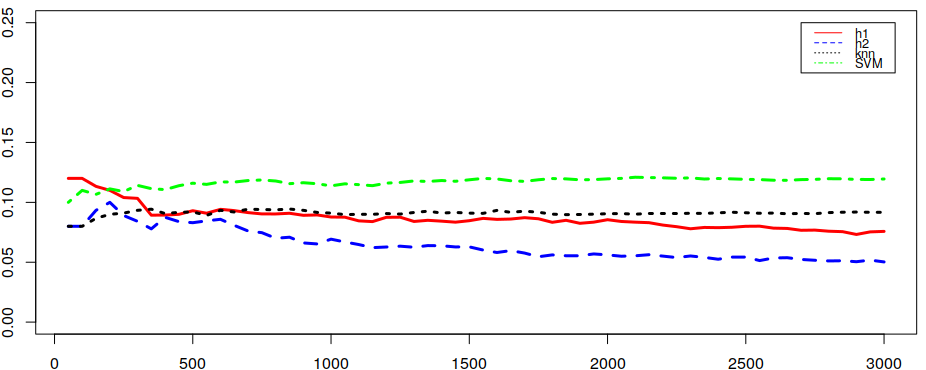}
			\includegraphics[scale=.24]{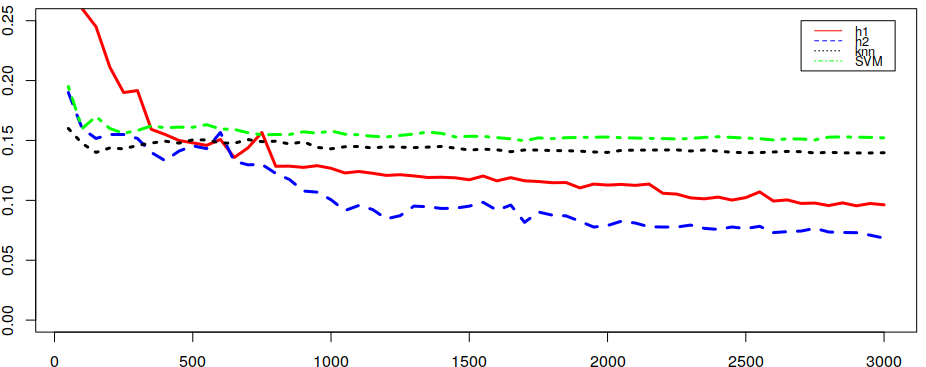}
		\caption{First row: Left, the medians  of the  misclassification errors. Right, means of the misclassification errors. 
			Second row:  Left, quantile 0.25 of the misclassification errors. Right, quantile 0.75 of the misclassification errors }
		\label{sslinc} 	
	\end{center}
\end{figure}

Figure \ref{fig2} exhibits the labels assigned by four different methods to a fixed realization of both $\mathcal X_l$ and $\mathcal D^n$. In the first row we show the labels assigned by the algorithm (with $h_l=0.15$), and the fast version of it. In the second row, the labels assigned by  $k$-nn, with  $k$=7 and SVM. The classification error  rates corresponding to each method  are $0.035$,  $0.06$, $0.14$ and $0.13$, respectively. 

\begin{figure}[htbp]
	\begin{center}			
		\includegraphics[scale=.35]{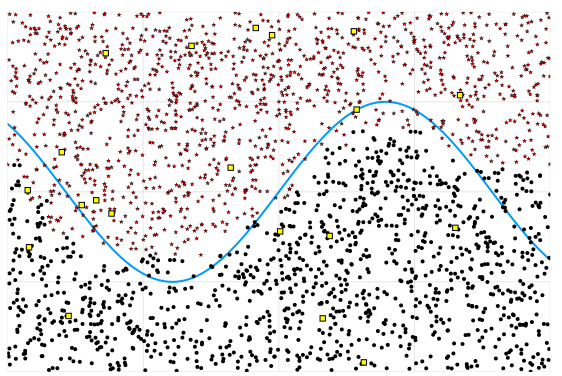} 
					\includegraphics[scale=.35]{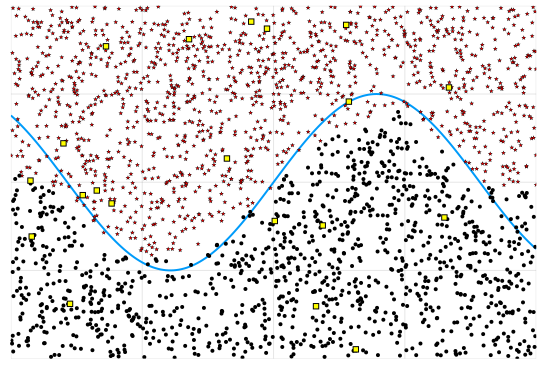} \\
	\includegraphics[scale=.35]{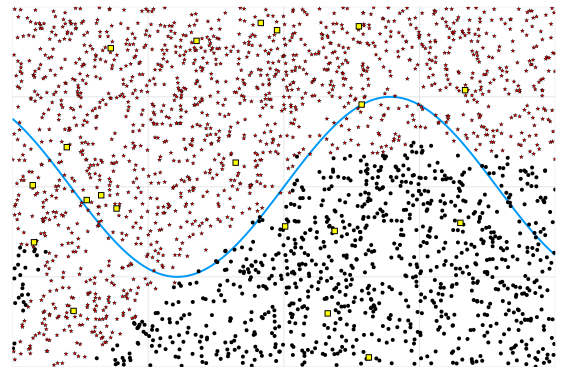} 	
			\includegraphics[scale=.36]{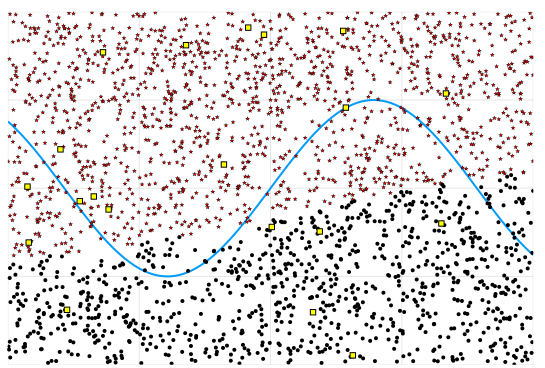} 
 \caption{Labels assigned by four different methods to a fixed realization of both $\mathcal X_l$ and $\mathcal D^n$. Red stars are points labelled as $1$ while black dots are labelled as $0$. The initial training sample $\D^n$ is represented as yellow squares. First row: left panel, faster version of the algorithm, presented in Section \ref{faster}, using  a $N$-grid with $N=21$ (distance 0.1 between points in each dimension).   Right panel output of the algorithm ran with bandwidth $h_l=0.15$. Second row: left panel,  labels assigned by  $k$-nn, with  $k$=7. Right panel  labels assigned by  SVM.}
		\label{fig2} 
\end{center}
\end{figure}
\subsection{A second example using simulated data }

To generate the data  consider two bi-variate normal random vectors  $Z_0\sim N(\mu_0,\Sigma)$ and $Z_1\sim N(\mu_1,\Sigma)$. Let  $Y\sim Bernoulli( 0.5)$. The conditional distribution of $X$ given $Y=y$, for $y=0,1$,  is given by $X\mid Y=y \sim Z_y \mid \|Z_y-\mu_y\|<1.5 $.

 We consider two cases: $\mu_0=(1.5,1.5)$, $\mu_1=(0,0)$ (see Figure \ref{gau} left) and $\mu_0=(1.2,1.2)$, $\mu_1=(0.0)$ (see Figure \ref{gau} right); in both cases $\Sigma=\text{diag}(0.6,0.6)$. 
In the first case the Bayes error  is $0.025$ and in the second one is $0.067$. 

We generate $\X_l=(X_1,\dots,X_l)$ iid, with $X_i$ distributed as $X$, and sample size $l=2000$.  In each replication, we used $\mathcal{D}^n=\{((0,0),1),((1.5,1.5),0)\}$ and bandwidth  $h=0.4$  to run the algorithm.

 The average of the computational time as well as the  error rate over 50 replications are reported in Table \ref{tabgau}, for different grid sizes. As it is shown in Table \ref{tabgau}, there is a trade-off between computation time and efficiency. However, if the cell sizes of the grid are reasonably small (as in the first column of Table \ref{tabgau}), the misclassification errors are essentially the same, while the computational time decreases. The simulation  was performed in \textsf{Julia} 1.0.1, running on an Intel i7-8550U.

\begin{figure}[htbp]
	\begin{center}
		\includegraphics[scale=.25]{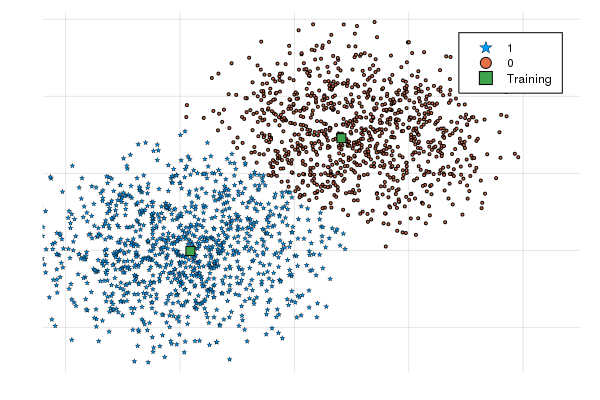} 
		\includegraphics[scale=.25]{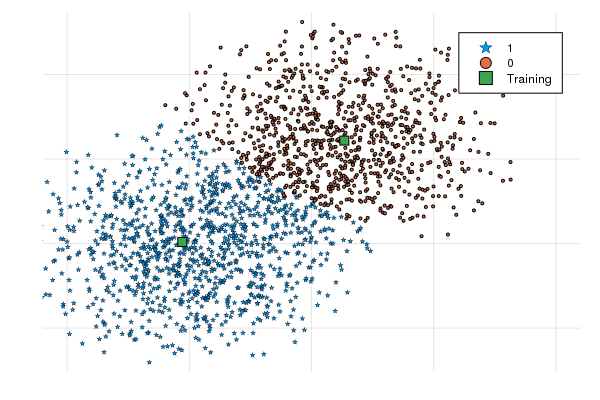} 
		\caption{The two populations of bi-variate truncated Gaussian distributions.}
		\label{gau}
	\end{center}
\end{figure}

\vspace{-1.2cm}
\begin{table}[h]
	\footnotesize
	\begin{center}
		\begin{tabular}{cc|cc|cc}
			\multicolumn{2}{c|}{Without Grid} & \multicolumn{2}{c|}{Grid step 0.1} & \multicolumn{2}{c}{Grid step 0.15} \\ \hline
			                                      \multicolumn{6}{c}{First Case}                                        \\ \hline
			Time  &           Error           & Time &            Error            & Time  &           Error            \\ \hline
			4.2s  &          0.0323           & 2.8s &            0.043            & 1.1s  &           0.046            \\ \hline
			                                      \multicolumn{6}{c}{Second Case}                                       \\ \hline
			Time  &           Error           & Time &            Error            & Time  &           Error            \\ \hline
			5.15s &           0.084           & 2.7s &            0.10             & 0.98s &           0.117
		\end{tabular}
		\caption{Average of the computation time  and miss-classification  errors over 50 replications.}\label{tabgau}
	\end{center}
\end{table}

\subsection{A real data example}
We consider the well known Isolet data set of speech features from the UCI Machine Learning Repository \cite{an:07}, comprising 617 attributes associated with the English pronunciation of the 26 letters of the alphabet. The data come from 150 people who spoke the name of each letter twice. There are three missing data, not considered in the study. Feature vectors include: spectral coefficients, contour features, sonorant features,
pre-sonorant features, and post-sonorant features, and are described in \cite{fc:91}. The spectral coefficients account for 352 of the features. The exact order of appearance of the features is not known.

We  apply the semi-supervised algorithm to the binary problem given by the  E-set comprising the letters $\{b, c, d, e, g, p, t, v, z\}$ and the R-set with the remaining letters except for the letters $\{m,n\}$, starting with a small labelled data set of 10 elements from each group. Then $\mathcal{D}^n$ consists of $20$ data.

To pre-process the data, we first removed the first repetition of every letter. Next, we kept only those data whose nearest neighbour is at a distance smaller than a threshold (the value $8$ was selected to reduce the misclassification error, and to reduce the computational time, in order repeat it 100 times). This pruning procedure reduced the sample $\mathcal{X}_l$ to $2171$ data. To study how the misclassification error varies with respect to the training sample, we randomly chose a training sample 100 times.  We compared with $k$-nn  choosing $k$ by cross validation, using $500$ replicates. A summary of the misclassification error rates is shown in Figure \ref{err} right, while the density of the errors of SSL is shown in Figure \ref{err} left.\\

\begin{minipage}[l]{\textwidth}
		\begin{minipage}[c]{0.3\textwidth}
			\tiny
			\includegraphics[scale=.15]{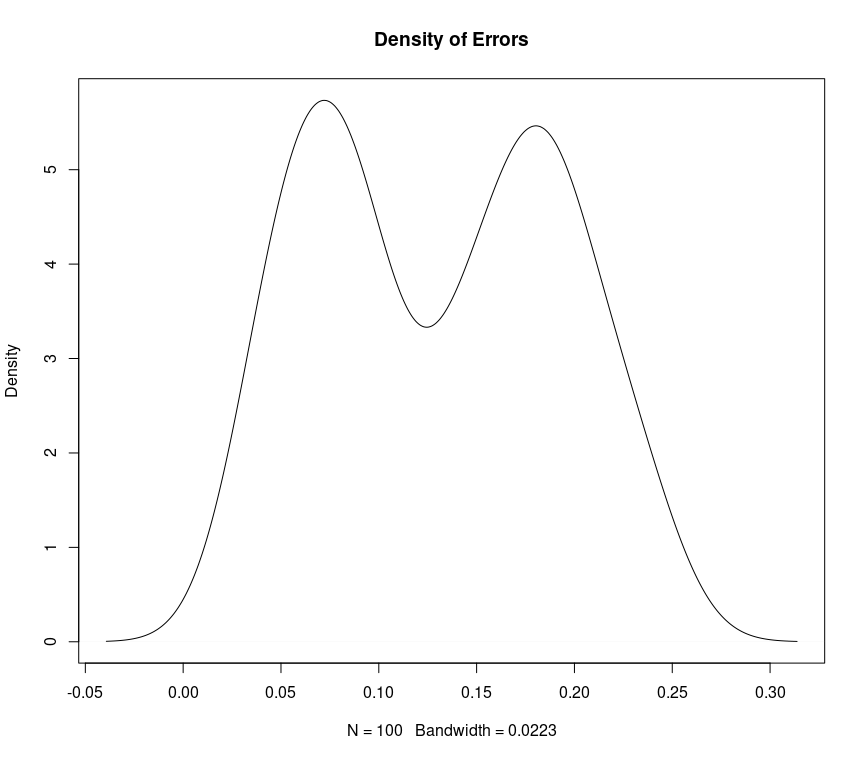} 
		\end{minipage} \begin{minipage}[r]{0.6\textwidth}
			\footnotesize
			\begin{tabular}{l|cccccc}
				       & Min.  & 1st Qu. & Median & Mean  & 3rd Qu. & Max.  \\ \hline
				SSL    & 0.028 &  0.076  & 0.139  & 0.130 &  0.184  & 0.247 \\
				$k$-nn & 0.048 &  0.124  & 0.149  & 0.144 &  0.168  & 0.216 \\ \hline
			\end{tabular}

\end{minipage}
			\captionof{figure}{Left: Density estimator of the errors of the SSL. Right: Summary of the missclassification error rate over 100 replications for SSL and 500 for $k$-nn with $k$ chosen by cross validation.}\label{err}
\end{minipage}

\section{Some remarks regarding the assumptions}\label{assump} 
 We discuss briefly the set of assumptions considered. Firstly, we would like to point out that the results we are looking at are quite strong, since the training sample is frozen  at a small fixed size $n$ and the asymptotic is on $l$ the unlabeled data size. These results should not be  misinterpreted. Without these hypotheses, the semi-supervised classification methods may work better than the classical supervised classification methods but the consistency will not be verified if the size $n$ of the training sample remains fixed. 
\begin{itemize}
	\item [1)] In order for an algorithm to work for the semi-supervised classification problem, the initial training sample $\mathcal{D}^n$ (whose size does not need to tend to infinity) must be well located. We require that $\mathcal{D}^n=(\X^n,\Y^n)$ satisfies $Y^i=g^*(X^i)$ for all $i=1,\dots,n$, which is a quite mild hypothesis. In many applications, a stronger condition can be assumed. For instance, if the two populations are sick or healthy,  the initial training sample can be chosen as the set of individuals for whom the covariate $X$ ensures the condition on the patient, that is, $\mathbb{P}(Y=1|X)=1$ or $\mathbb{P}(Y=1|X)=0$. On the other hand, if the initial training sample is not well located, then any algorithm  might classify almost all observations wrongly. Indeed, consider the case where the distribution of the population with label 0 is $N(0,1)$ and the other is $N(1,1)$. This will be the case if we start for instance with the pairs $\{(0.4,1),(0.6,0)\}$.\\
The effect of the initial training sample $\mathcal{D}^n$ is illustrated in the real-data example where the misclassification error varies between 0.028 and 0.247 by changing at random $\mathcal{D}^n$.
	\item[2)] The connectedness of $I_0$ and $I_1$ is also critical. In a situation like the one shown in Figure \ref{fig1}, the points in the connected component for which there is no point in $\D^n$ (represented as squares) will be classified as the circles by the algorithm. However, if $I_0$ and $I_1$ have a finite number of connected components and there is at least one pair $(X^i,Y^i)\in \D^n$ in each of them with $g^*(X^i)=Y^i$, it is easy to see that the algorithm will be consistent. 
	\item[3)] The uniform kernel assumption can be replaced by any regular kernel satisfying $c_1 I_{B(0,1)}(u)\leq K(u) \leq c_2 I_{B(0,1)}(u),$ for some positive constants $c_1, c_2$, and the results still hold. 
	\item [4)]   In Proposition \ref{mariela} we assume that $P_X$ has a continuous density $f$ with compact support $S$. If that is not the case, it is possible to take a large enough compact set $S$ such that $P_X(S^c)$ is very small and therefore just a few data from $\X^l$ is left out.
	\item[5)] The following example shows that H5 is necessary for consistency. Indeed, suppose that $U_1:=X|Y=1\sim U([a,1])$ and $U_0=X|Y=0\sim U([0,a])$ with $a=P(Y=0)$, then for all $a\in [0,1]$, $P_X=aU_0+(1-a)U_1\sim U([0,1])$. Unless the training sample $\D^n$ contains two points $(X_1,0)$ and $(X_2,1)$ with $X_1$ and $X_2$ very close to $a$, semi-supervised methods will fail. Regardless of the value of $a$, the classes 0 and 1 are indistinguishable since the joint distribution is in all cases $U[0,1]$. 
	
	Moreover, there is no consistent semi-supervised algorithm for $n$ fixed. To see this, consider $(X_1,Y_1),\dots,(X_n,Y_n)$ a training sample in $[0,1]$ with fixed size $n$. 
	Let us denote $X_m=\min\{X_i: (X_i,1)\in \mathcal{D}^n \}$, $X^M=\max\{X_i: (X_i,0)\in \mathcal{D}^n\}$, $\mathcal{X}_l\cap (X_m,X^M)=\{X_{i_1},\dots X_{i_k}\}$ and $\tilde{Y}_{i_1},\ldots,\tilde{Y}_{i_k}$ the labels assigned by any algorithm. Then if $\sum_{j=1}^k \tilde{Y}_{i_k}>k/2$, conditioned to the training sample $\mathcal{D}^n$, if we choose $a=X^M$  we will miss-classify at least $k/2$ data-points, and if $\sum_{j=1}^k \tilde{Y}_{i_k}\leq k/2$, $a=X_m$ we will do the same. 
	
\end{itemize}
\begin{figure}[h]
	\begin{center}
		\includegraphics[scale=.18]{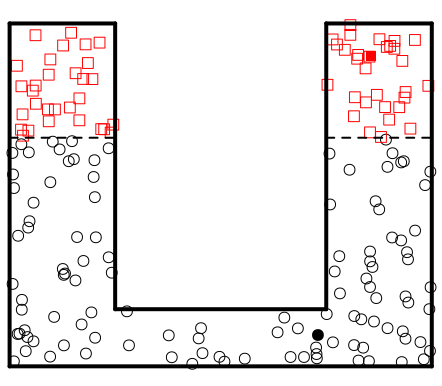} 
		\caption{The points labelled as $0$ are represented with squares while the points labelled as 1 are represented with circles. Filled points belong to $\D^n$.}
		\label{fig1}
	\end{center}
\end{figure}

\section{Concluding remarks}

In this paper we address the problem of semisupervised learning. We propose a simple algorithm and analize its asymptotic behaviour. 

The focus is on understanding when and why SSL works when the training sample is small and frozen, and the asymptotic is in the sense of the strong formulation given in equation \eqref{consistency}, where the limit is only on the size $l$ of the unlabelled data-set.

From the discussion on Section \ref{assump} it follows that SSL will only work under the  almost necessary hypotheses we have assumed.

The first simulation example shows the behaviour of our algorithm. In  particular Figure \ref{hist}  and Table \ref{errej1} exhibit the effect of the initial training sample $\mathcal{D}^n$ on the output of the procedure, which is related to assumption H8. 
We also study the effect of increasing the size of the unlabelled data, where we compare with two well known competitors: $k$-nearest neighbours and support vector machine. Our SSL proposal outperforms the competitors if $l$ is larger than $500$.

The second simulated example studies the trade-off between computation time and efficiency. Lastly, on the real-data example we challenge our algorithm by comparing with the $k$-nearest neighbour rule performed on $\mathcal{D}^n$. In this case the results are only slightly better.

\section*{Appendix A}\label{app}

 \textit{Proof of Proposition  \ref{prop0}.}\\
 Observe that $\mathbb{P}\big(g_i(\X_l)\neq Y_i \mid \X_l\setminus X_i\big)\geq \mathbb{P}(g^*(X_i)\neq Y_i) $, for $i=1,\ldots, l$. Thus, 
	$$
	\mathbb{E}\Big(\mathbb{I}_{g_i(\X_l)\neq Y_i}\Big)=\mathbb{P}(g_i(\X_l)\neq Y_i)
	=\mathbb{E}\Big(\mathbb{P}\big(g_i(\X_l)\neq Y_i|\X_l\setminus X_i\big)\Big)\geq \mathbb{P}(g^*(X_i)\neq Y_i),
	$$
	and therefore, $L({\mathbf{g}_l})=\mathbb{E}\Big(\frac{1}{l}
	\sum_{i=1}^l
	I_{g_i(\X_l)\neq Y_i}\Big)\geq \mathbb{P}(g^*(X_i)\neq Y_i),$ 
	showing that $L({\mathbf{g}_l})\geq \mathbb{P}(g^*(X)\neq Y)$,   for any $\mathbf{g}_l=(g_1,\ldots, g_l)$. The lower bound is attained by choosing  the $i$th coordinate of 
$ \mathbf{g}_l$ equal to $g^*(X_i)$. Moreover, the accuracy  of $\mathbf{g}^\ast_l$ equals that of a single coordinate; namely $L(\mathbf{g}^\ast_l)= \mathbb{P}(g^*(X)\neq Y)=L^*$.  \\

 \textit{Proof of Proposition  \ref{alg2}.}\\
We will prove that if H1, H2 i) and H4  are satisfied, then 
$\mathcal I_{0,l}\cap \mathcal I_{1,l}\subset \mathcal F_l$. Combining this inclusion with  H3 we conclude that $\mathbb{P}(\mathcal F)=1$. 
To prove that 
$\mathcal I_{0,l}\cap \mathcal I_{1,l}\subset \mathcal F_l$, we will see that if 
\begin{equation}
\label{cubro_nuevo}
I_a\subseteq \bigcup_{X\in \X_{l}\cap I_a } B(X,h_l/2)\;,\quad a=0,1, 
\end{equation} 
all the elements of $\X_l$ are labeled by the algorithm.  To do so, note that, by   H4, there exists   $X_a^{\ast}$ in $\X^n$ such that 
$X_a^{\ast}\in I_a$, for $a=0,1$. We will now prove that the algorithm starts. 
Since  $X^*_1$ is in $I_1$ and \eqref{cubro_nuevo} holds with $a=1$, there exists $X_j^1\in \X_l\cap I_1$ with  $d(X_1^{\ast}, X_j^1)< h_l$.   In particular, $d(\X^n, X_j^1)<h_l$ and so $ X_j^1\in \unclass_0(h_l)$. This guarantees that $\unclass_0(h_l) \not=\emptyset$ and hence the algorithm can start.

Assume now that we have classified $j< l$ points of $\X_l$. We will prove that  there exists at least one point satisfying the iteration condition required at step $j+1$: $\unclass _{j}(h_l)\not=\emptyset $. By H1 we can assume that $\unclass_j=\unclass_j\cap (I_0\cup I_1)$.  Take $a$ such that $\unclass_j\cap I_a\not=\emptyset$.   We will consider now two possible cases: (i) if $\X_l\cap I_a\cap \unclass_j^c=\emptyset$, then $\X_l\cap I_a=\X_l\cap I_a\cap \unclass_j $ and so, by \eqref{cubro_nuevo}, $X^*_a\in B(X,h_l/2)$ 
 for some $X\in \X_l\cap \unclass_j$. Since $X^*_a$ is in $\Z_j$ and $X\in \unclass_j$, we conclude that $X\in \unclass_j(h_l)$. Assume now that (ii)
 $\X_l\cap I_a\cap \unclass_j^c\not=\emptyset$.
 Since $I_a$ is connected and  \eqref{cubro_nuevo} holds,  the union of 	 $ B(X, h_l/2)$, with $X \in \X_l\cap I_a$,  is also a connected set and, therefore,
 $$\Bigg(\bigcup_{X\in \X_{_l}\cap I_a \cap \unclass_{j}^c} B(X,h_l/2)\Bigg)\  \ \bigcap \    \Bigg(\bigcup_{X\in \X_{l}\cap I_a \cap \unclass_j} B(X,h_l/2)\Bigg)\neq \emptyset.$$
 Finally, take $X\in \X_{_l}\cap I_a \cap \unclass_{j}^c$  
 and  $\tilde X\in \X_{_l}\cap I_a \cap \unclass_j$ such that 
 $B(X, h_l/2)\cap B(\tilde X,h_l/2)\not=\emptyset$ to conclude that $\tilde X\in \unclass_j(h_l)$. \\

 \textit{Proof of Lemma  \ref{lem1}.}\\
By H1, we can assume that $\eta(X)\neq 1/2$ for all $X\in \X^n\cup\X_l$.
	Assume first  that $\eta(X_{j_{bad}})>1/2$, that is, $X_{j_{bad}}\in I_1$, $\tilde Y_{j_{bad}}=0$, and all the points labelled up to the step $j_{bad}-1$  by the algorithm are well classified. Now, suppose by contradiction $X_{j_{bad}}\not\in B_1^{h_l}$,  which means  that $X_{j_{bad}}\not\in B(I_0, h_l)$ and thus,  $B(X_{j_{bad}}, h_l)\cap I_0=\emptyset$. This implies that 
	$g^*(X)=1 $ for all $X\in (\X^n\cup  \{X_{i_1},\dots,X_{j_{bad} -1} \})\cap B(X_{j_{bad}},h_l)$, contradicting the label assigned to $X_{j_{bad}}$ according to the majority rule that is used by the algorithm.
	Thus, $B(X_{j_{bad}}, h_l)\cap I_0\not=\emptyset$, and so $X_{j_{bad}}\in B_1^{h_l}$. Analogously, if $\eta(X_{j_{bad}})<1/2$, we deduce that $X_{j_{bad}}\in B_0^{h_l}$. \\

	 \textit{Proof of Lemma  \ref{lemaux}.}\\
	 	Given $\delta<\delta_1$, choose $\varepsilon$ such that $\gamma(\delta)-2\varepsilon >0$, for $\gamma(\delta)$ introduced in H6.  We will prove $\mathcal S_l^\varepsilon= \{\sup_{u \in S}|\hat f_l(u)-f(u)|<\varepsilon\}$ is included in $\mathcal \mathcal \mathcal V_l^\delta$ 	as far as $h_l<\delta$  and therefore, from   \eqref{conv_unif}, we conclude that $\mathbb{P}(\mathcal V^\delta )=1$. 
	
	Now, note that on $\mathcal S_l^\varepsilon$, we get that $f(u)-\varepsilon < \hat f_l(u) < f(u)+\varepsilon,$
	and so, on $\mathcal S_l^\varepsilon$, for  $a\in A_0^{\delta}\cup A_1^{\delta}$ and $b \in  B_1^{h}\cup B^{h}_0$,  $\hat f_l(b)< f(b)+\varepsilon <f(a)-\gamma  +\varepsilon < \hat f_l(a)+2\varepsilon- \gamma.$ Thus, on $\mathcal S_l^\varepsilon$, 
	$$\sup_{b\in  B_0^{h_l}\cup B_1^{h_l}}\hat f_l(b) \;\leq\;  \inf_{a\in A_0^\delta\cup A_1^\delta} \hat f_l(a)+2\varepsilon-\gamma<  \inf_{a\in A_0^\delta\cup A_1^\delta} \hat f_l(a),  $$
	when $2\varepsilon -\gamma<0$.  This proves that $\mathcal S_l^\varepsilon\subseteq \mathcal \mathcal F_l^\delta$, for $l$ such that $8h_l<\delta$.  
	
	\bigskip

		 \textit{Proof of Lemma  \ref{lem2}.}\\
When $j_{bad}=\infty$, $\Z_{j_{bad}-1}=\X_n\cup \X_l$. This fact implies that,  on  the event   $\mathcal F_l\cap\mathcal B_l^c$,  the following identity holds: $\X_l\cap  (\Z_{j_{bad}-1})^c=\emptyset $. Thus, to prove \eqref{prop_10_new}, 	 we need  to show that, for $a=0,1$
$\mathcal F_l \cap \mathcal A_{a,l}^\delta \cap \mathcal V^\delta_l\cap \mathcal B_l \;\subset\; \left\{\X_l\cap A_a^\delta \cap (\Z_{j_{bad}-1})^c=\emptyset  \right\}.$
We will argue by contradiction, assuming that there exists $\omega \in \mathcal F_l \cap \mathcal A_{a,l}^\delta \cap \mathcal V^\delta_l\cap \mathcal B_l $ 
for which $\emptyset\not= \X_l\cap A_a^\delta \cap (\Z_{j_{bad}-1})^c=\{W_1, \ldots, W_m\}$.  Invoking H8,  $\X^n\subseteq \Z_{j_{bad}-1}$ and there exists $X_a^\ast \in A_a^\delta \cap \X^n$. These facts guarantee that $X_a^\ast \in A_0^\delta \cap \Z_{j_{bad}-1}$, and since we are working on $\mathcal A_{a,l}^\delta$, we get that
\begin{equation}
\label{cubro_lemma}
X_a^\ast \in A_a^\delta \subseteq \bigcup_{X\in \X_{l}\cap A_a^\delta } B(X,h_l/2)\quad\hbox{and}\quad X_a^\ast \in\Z_{j_{bad}-1}. 
\end{equation}
Next, we will argue that there exist $W^\ast\in \{W_1, \ldots, W_m\}$  such that \break $d(W^\ast ,\Z_{j_{bad}-1})<h_l$. To do so, consider the following two cases: 
\begin{itemize}
\item [(i)] $\X_l \cap A_a^\delta \cap\Z_{j_{bad}-1} =\emptyset$. In such a case,  from \eqref{cubro_lemma} we get that $A_a^\delta$ can be covered by balls  centered at $\{W_1, \ldots, W_m\}$  and, since $X_a^\ast\in A_a^\delta$, $X_a^\ast\in B(W^\ast, h_l/2)$ for  some $W^\ast \in \{W_1, \ldots, W_m\}$. Therefore, $d(X_a^\ast, W^\ast)<h_l$. Recalling that, as stated in \eqref{cubro_lemma}, $X_a^\ast \in\Z_{j_{bad}-1}$, we conclude that  \break $d(W^\ast ,\Z_{j_{bad}-1})<h_l$.

\item [(ii)] Assume now that $\X_l \cap A_a^\delta \cap \Z_{j_{bad}-1} \not =\emptyset$.  Since $A_a^\delta $ is connected, the union of balls given in \eqref{cubro_lemma} is connected, and then,
\begin{equation*}
\Bigg\{\bigcup_{X\in \X_{l}\cap A_a^\delta\cap  \Z_{j_{bad}-1} } B(X,h_l/2)\Bigg\} \quad \bigcap \quad
\Bigg\{\bigcup_{1\leq i\leq m} B(W_i,h_l/2)\Bigg\}\quad \not=\quad \emptyset. 
\end{equation*}
Thus, there exist $X\in \Z_{j_{bad}-1}$ and $W^\ast\in \{W_1, \ldots, W_m\}$ with $d(X^\ast, W^\ast)<h_l$, which implies that  
$d(W^\ast ,\Z_{j_{bad}-1})<h_l$.
\end{itemize}
 To finish the proof, we will show that such a $W^\ast$ should have been chosen by the algorithm to be labelled before $X_{j_{bad}}$,  which implies that $W^\ast\in \mathcal{Z}_{j_{bad}-1}$, contradicting that $W^\ast\in (\mathcal{Z}_{j_{bad}-1})^c$. This contradiction show that
no such $W^\ast$ exists, as announced.  
Since $d(W^\ast ,\Z_{j_{bad}-1})<h_l$, we get that 
$W^\ast \in \unclass_{j_{bad}-1}(h_l)$, the set of candidates to be labelled by the algorithm at step $j_{bad}$. Indeed, since $W^\ast \in A_a^\delta$ and $h<\delta$, $B(W^\ast, h_l) \subseteq I_a$. Thus, $\hat{\eta}_{j_{bad}-1}(W^\ast)=a$ implying that $W^\ast$ attains the maximum stated in \eqref{ji}.  Invoking now Lemma \ref{lemaux}, since $W^\ast \in A_a^\delta$ while  $X_{j_{bad}}$ is in $B_0^h\cap B_1^h$ (see Lemma \ref{lem1}), 
we know that  $\#\{\X_l\cap B(W^\ast,h_l)\}\geq \#\{\X_l\cap B(X_{j_{bad}},h_l)\}$; thus, $W^\ast$ should have been chosen before $X_{j_{bad}}$. This conclude the prof of the result. 

			\begin{figure}[h]
				\begin{center}
					\includegraphics[scale=.5]{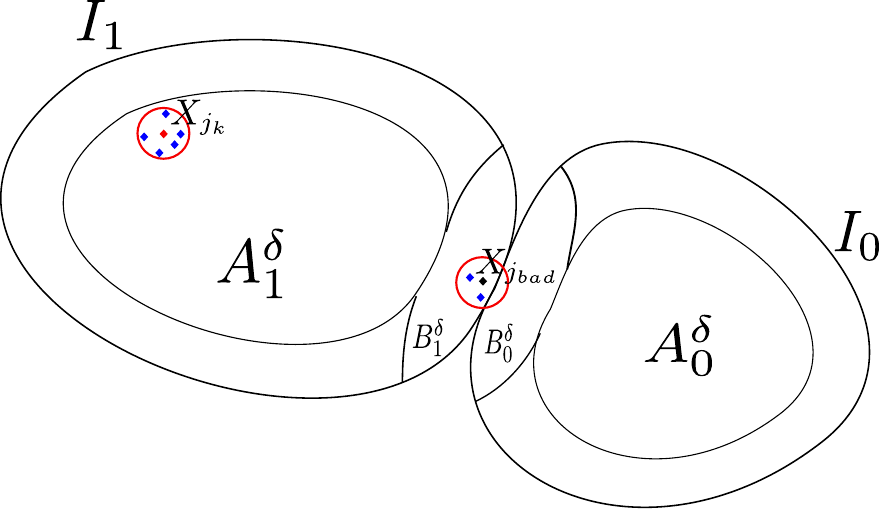} 
					\caption{We show: in black $X_{j_{bad}}$, in red  $X_{j_k}$, in blue we represent the points of $\mathcal{X}_{l_k}$ belonging to $B(X_{j_k},h_{l_k})$ and $B(X_{j_{bad}},h_{l_k})$.}
					\label{IAB}
				\end{center}
			\end{figure}

\textit{Proof of Theorem \ref{teoconst}}

Recall that $g_{{n,l,r(i)}}(\X_l)$ denotes the label assigned by the algorithm to the observation$ X_i\in \X_l$.  	The empirical mean accuracy of classification satisfies 
\begin{align*}
\frac{1}{l}\sum_{i=1}^l \mathbb{I}_{g_{{n,l,r(i)}}(\X_l)= Y_i}\geq & \frac{1}{l}\sum_{i=1}^{l} \mathbb{I}_{g_{{n,l,r(i)}}(\X_{l})= Y_i}\;\mathbb{I}_{g^*(X_i)=Y_i}\;\mathbb{I}_{A_0^\delta\cup A_1^\delta}(X_i)\\
=& \frac{1}{l}\sum_{i=1}^{l} \mathbb{I}_{g_{{n,l,r(i)}}(\X_{l})= g^*(X_i)} \;\mathbb{I}_{g^*(X_i)=Y_i}\;\mathbb{I}_{A_0^\delta\cup A_1^\delta}(X_i). 
\end{align*}
Consider  $\mathcal T_l^\delta=\mathcal F_l\cap \mathcal A_{0,l}^\delta \cap \mathcal A_{1,l}^\delta\cap  \mathcal V_l^\delta$, and $\mathcal{T}^\delta =\bigcup_{l_0}\bigcap_{l\geq l_0}\mathcal T_l^\delta$.  Combining the results obtained in  Proposition \ref{alg2} and  Lemma \ref{lemaux} with 
condition  H5,  we conclude  that $\mathbb{P}(\mathcal T^\delta)=1$, for $\delta<\min\{\delta_0, \delta_1,\delta_2\}$. By \eqref{eq4},  on $\mathcal T_l$, we have that $\mathbb{I}_{g_{{n,l,r(i)}}(\X_{l})= g^*(X_i)}\geq \mathbb{I}_{A_0^\delta\cup A_1^\delta}(X_i)\quad \text{ for all }i=1,\dots,l,$
and therefore
\begin{eqnarray*}
	\frac{1}{l}\sum_{i=1}^{l} \mathbb{I}_{g_{{n,l,r(i)}}(\X_{l})= g^*(X_i)} \;\mathbb{I}_{g^*(X_i)=Y_i}\;\mathbb{I}_{A_0^\delta\cup A_1^\delta}(X_i)\geq 
	\frac{1}{l}\sum_{i=1}^{l} 
	\mathbb{I}_{g^*(X_i)=Y_i} \;\mathbb{I}_{A_0^\delta\cup A_1^\delta}(X_i).
\end{eqnarray*}
Then, on $\mathcal T^\delta$, we have that $\liminf_{l\rightarrow \infty}	\frac{1}{l}\sum_{i=1}^l \mathbb{I}_{g_{{n,l,r(i)}}(\X_l)= Y_i}\geq \mathbb{P}\{g^*(X)=Y, X\in A_0^\delta\cup A_1^\delta \}$	and so 
\begin{eqnarray}
\label{liminf}
\liminf_{l\rightarrow \infty}	\frac{1}{l}\sum_{i=1}^l \mathbb{I}_{g_{{n,l,r(i)}}(\X_l)= Y_i} \geq \mathbb{P}\{g^*(X)=Y\} \quad a.s.
\end{eqnarray}
On the other hand, 
\begin{eqnarray*}
	\frac{1}{l}\sum_{i=1}^l \mathbb{I}_{g_{{n,l,r(i)}}(\X_l)= Y_i}=
	\frac{1}{l}\sum_{i=1}^l \;\mathbb{I}_{g_{{n,l,r(i)}}(\X_l)= Y_i} \mathbb{I}_{g^\ast(X_i)=Y_i}\\ + \frac{1}{l}\sum_{i=1}^l \mathbb{I}_{g_{{n,l,r(i)}}(\X_l)= Y_i}\;\mathbb{I}_{g^\ast(X_i)\not=Y_i}
	\leq \frac{1}{l}\sum_{i=1}^l \mathbb{I}_{g^\ast(X_i)=Y_i} + \frac{1}{l}\sum_{i=1}^l \mathbb{I}_{g_{{n,l,r(i)}}(\X_l)\not= g^\ast(X_i)}\\
\end{eqnarray*}
From Lemma \ref{lem2}, on $\mathcal T_l^\delta$, 
$\mathbb{I}_{g_{{n,l,r(i)}}(\X_l)\not= g^\ast(X_i)}\leq
\mathbb{I}_{(A_0^\delta\cup A_1^\delta)^c}(X_{i})$, and therefore, on $\mathcal T^\delta$, 
$$ \limsup_{l\rightarrow \infty} \frac{1}{l}\sum_{i=1}^l \mathbb{I}_{g_{{n,l,r(i)}}(\X_l)= Y_i}\leq \mathbb{P}(g^\ast(X)=Y)+\mathbb{P}(X\not \in \{A_0^\delta\cup A_1^\delta \}) .$$
By H2 ii), the last term in the previous display  converges to zero when $\delta\rightarrow 0$, and thus 
\begin{equation}
\label{limsup}
\limsup_{l\rightarrow \infty} \frac{1}{l}\sum_{i=1}^l \mathbb{I}_{g_{{n,l,r(i)}}(\X_l) }\leq \mathbb{P}(g^\ast(X)=Y)\quad a.s.
\end{equation}
Combining  \eqref{liminf} and \eqref{limsup}  we deduce the announced convergence. The consistency defined in \eqref{consistency} follows from the Dominated convergence theorem.

\section*{Appendix B} \label{apB}

In this section we will prove that under $H2$, conditions $H3$, and $H6$ holds if we impose some geometric restrictions on $I_0$ and $I_1$. In order to make this Appendix self contained, we need  some geometric definitions and also include some results which will be invoked.

First we introduce the concept of Hausdorff distance.
Given two compact non-empty sets $A,C\subset{\mathbb R}^d$, the \it Hausdorff distance\/ \rm or \it Hausdorff--Pompei distance\/ \rm between $A$ and $C$ is defined by
$d_H(A,C)=\inf\{\eps\geq 0: \mbox{such that } A\subset B(C,\eps)\, \mbox{ and }
C\subset B(A,\eps)\}.$ 

Next, we define standard sets, according to \cite{cue97} (see also \cite{cue04}). 
\begin{definition} \label{st} A bounded set $S\subset \mathbb{R}^d$ is said to be standard with respect to a Borel measure $\mu$ if there exists $\lambda>0$ and $\beta>0$ such that $\mu\big(B(x,\eps)\cap S\big)\geq \beta \mu_L(B(x,\eps))\text{ for all }x\in S,\ 0<\eps\leq \lambda,$ where $\mu_L$ denotes the Lebesgue measure on $\mathbb{R}^d$.
\end{definition}

Roughly speaking, standardness prevents the set from having peaks that are too sharp. 

The following theorem is proved in \cite{cue04}). 

\begin{theorem} (\cite{cue04} ) \label{cuevas} 
	Let $Z_1,Z_2,\dots$ be a sequence of iid observations in $\mathbb{R}^d$ drawn from a distribution $P_Z$. Assume that the support $Q$ of $P_Z$ is compact and standard with respect to $P_Z$. Then 
	\begin{equation}\label{cuev}
	\limsup_{l\rightarrow \infty} \left(\frac{l}{\log(l)}\right)^{1/d}d_H(\mathcal{Z}_l, Q)\leq \left(\frac{2}{\beta \omega_d}\right)^{1/d}\quad \text{ a.s.},
	\end{equation}
	where $\omega_d=\mu_L(B(0,1))$, $\mathcal{Z}_l=\{Z_1,\dots,Z_l\}$, and $\beta$ is the standardness constant introduced in Definition \ref{st}.
\end{theorem}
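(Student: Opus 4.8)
The plan is to prove the one-sided covering bound directly, via a Borel--Cantelli argument that couples the standardness hypothesis to a covering (net) argument. First I would observe that, since every $Z_i$ lies in the support $Q$, we have $\mathcal{Z}_l\subseteq Q$, so that the Hausdorff distance collapses to the one-sided quantity $d_H(\mathcal{Z}_l,Q)=\sup_{x\in Q}d(x,\mathcal{Z}_l)$: the requirement $\mathcal{Z}_l\subseteq B(Q,\eps)$ holds already with $\eps=0$, and only $Q\subseteq B(\mathcal{Z}_l,\eps)$ is binding. Thus $d_H(\mathcal{Z}_l,Q)>r$ exactly means that there is some $x\in Q$ with $B(x,r)\cap\mathcal{Z}_l=\emptyset$.

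Next, fix $\eps>0$ and set $r_l=\big(\tfrac{2(1+\eps)}{\beta\omega_d}\tfrac{\log l}{l}\big)^{1/d}$; the goal is to show that almost surely $d_H(\mathcal{Z}_l,Q)\le r_l$ for all large $l$. I would introduce a small auxiliary parameter $\delta\in(0,1)$, to be fixed later in terms of $\eps$, and cover the compact set $Q$ by $N_l$ balls of radius $\delta r_l$ centered at net points $x_1,\dots,x_{N_l}\in Q$. A standard packing/volume estimate gives $N_l=O\big((\delta r_l)^{-d}\big)=O\big(\delta^{-d}\,l/\log l\big)$. The key geometric step is that if $d_H(\mathcal{Z}_l,Q)>r_l$ then some $x\in Q$ has $d(x,\mathcal{Z}_l)>r_l$; picking a net point $x_k$ with $\|x-x_k\|\le\delta r_l$ yields $B(x_k,(1-\delta)r_l)\cap\mathcal{Z}_l=\emptyset$. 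Hence $\{d_H(\mathcal{Z}_l,Q)>r_l\}\subseteq\bigcup_{k=1}^{N_l}\{B(x_k,(1-\delta)r_l)\cap\mathcal{Z}_l=\emptyset\}$.

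On each of these events I would invoke standardness: since $r_l\to 0$, for large $l$ the radius $(1-\delta)r_l$ lies below the threshold $\lambda$, so $P_Z\big(B(x_k,(1-\delta)r_l)\big)\ge\beta\omega_d(1-\delta)^d r_l^d$. By independence and $1-p\le e^{-p}$, the chance that no sample point lands in a fixed such ball is at most $e^{-l\beta\omega_d(1-\delta)^d r_l^d}=l^{-2(1+\eps)(1-\delta)^d}$, by the choice of $r_l$. A union bound then gives $\pr\big(d_H(\mathcal{Z}_l,Q)>r_l\big)=O\big(\delta^{-d}\,\tfrac{l}{\log l}\,l^{-2(1+\eps)(1-\delta)^d}\big)$. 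I would finally fix $\delta=\delta(\eps)$ small enough that $(1+\eps)(1-\delta)^d>1$, so that the exponent satisfies $2(1+\eps)(1-\delta)^d>2$ strictly; the displayed probabilities are then summable in $l$, and Borel--Cantelli gives $d_H(\mathcal{Z}_l,Q)\le r_l$ eventually, almost surely. This yields $\limsup_l(l/\log l)^{1/d}d_H(\mathcal{Z}_l,Q)\le(2(1+\eps)/(\beta\omega_d))^{1/d}$ a.s.; intersecting these full-measure events over a countable sequence $\eps\to 0$ produces the stated bound.

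The main obstacle is the exact calibration of the constant. The net radius $\delta r_l$ must be a vanishing fraction of $r_l$ so that the effective ball $B(x_k,(1-\delta)r_l)$ still captures essentially the full mass $\beta\omega_d r_l^d$; otherwise the tail estimate would lose a factor like $(1-\delta)^d$ (or, with a cruder $r_l/2$ net, a fatal factor $2^{-d}$) and the limit constant would degrade. Simultaneously the net cardinality $N_l=O(\delta^{-d}\,l/\log l)$ must remain controlled so the union bound stays summable. The tension is resolved because $N_l$ grows only polynomially, like $l/\log l$, whereas each single-ball empty-probability decays like the power $l^{-2(1+\eps)(1-\delta)^d}$ with exponent strictly above $2$; this leaves room to send $\delta\to 0$ after $\eps$ has been fixed, recovering the sharp constant $(2/(\beta\omega_d))^{1/d}$.
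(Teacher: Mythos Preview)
The paper does not include a proof of this theorem; it is quoted verbatim from Cuevas and Rodr\'{\i}guez-Casal (2004) and used as a black box (the sentence immediately preceding the statement reads ``The following theorem is proved in \cite{cue04}''). Your argument is correct and is essentially the standard covering-plus-Borel--Cantelli approach that appears in the original reference: reduce $d_H(\mathcal{Z}_l,Q)$ to a one-sided supremum, discretize $Q$ by an $\eps$-net whose cardinality grows like $l/\log l$, use standardness to lower-bound the mass of each small ball, and tune the constants so that the union bound is summable.
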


\begin{remark} \label{rem1} Theorem \ref{cuevas} implies that, if we choose $\epsilon_l=C \left(\frac{\log(l)}{l}\right)^{1/d}$ with $C>(2/(\beta\omega_d))$, then $Q\subset \cup_{i=1}^l B(Z_i,\epsilon_l)$ for $l$ large enough. This in turn implies that if $Q$ is connected, $\cup_{i=1}^l B(X_i,\epsilon_l)$ is connected.
\end{remark}

As a consequence of Theorem \ref{cuevas}, 
we get the following covering property that will be used to prove  Proposition 2 and H5 alone Proposition \ref{mariela}.

\begin{lemma} \label{lemcub} Let $X_1,X_2,\dots$ be a sequence of iid observations in $\mathbb{R}^d$ drawn from a distribution $P_X$ with support $S$.
	Let $Q\subset S$, be  compact and  standard with respect to $P_X$ restricted to $Q$,  with  $P_X(Q)>0$. Consider $(h_l)_{l\geq 1}$ such that $h_l\rightarrow 0$ and $lh_l^d/\log(l)\rightarrow \infty$. Then, with probability one, for $l$ large, $Q \subset  \bigcup_{X\in \X_{l}\cap Q } B(X,h_l/2),$	where $\mathcal{X}_l=\{X_1,\dots,X_l\}$.
\end{lemma}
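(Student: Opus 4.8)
\textbf{Proof proposal for Lemma \ref{lemcub}.}

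The plan is to reduce the statement to the Hausdorff-distance estimate of Theorem \ref{cuevas} applied to the restricted sample. First I would observe that, since $P_X(Q)>0$ and $Q$ is standard with respect to $P_X$ restricted to $Q$, the conditional distribution $P_X(\cdot\mid X\in Q)$ has compact support $Q$ (using that $Q$ is compact) and is itself standard with the same constant $\beta$, up to the multiplicative factor $P_X(Q)$ which only changes the standardness constant to $\beta\,P_X(Q)>0$. Hence Theorem \ref{cuevas} applies to an i.i.d.\ sequence drawn from $P_X(\cdot\mid X\in Q)$.

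The subtlety is that the points of $\X_l\cap Q$ are \emph{not} the first $l$ draws from the conditional law; rather, $\#(\X_l\cap Q)$ is random, roughly $l\,P_X(Q)$. So the next step is to pass from the full sample to the sub-sample: let $N_l=\#(\X_l\cap Q)$, and note that by the strong law of large numbers $N_l/l\to P_X(Q)>0$ almost surely, so in particular $N_l\to\infty$ and $N_l\geq \tfrac12 l\,P_X(Q)$ eventually. Conditionally on the indices falling in $Q$, the points $\X_l\cap Q$ are i.i.d.\ with law $P_X(\cdot\mid X\in Q)$; applying Theorem \ref{cuevas} to this sequence gives, almost surely,
\begin{equation*}
\limsup_{m\to\infty}\left(\frac{m}{\log m}\right)^{1/d} d_H(\text{first $m$ points in }Q,\;Q)\;\leq\;\left(\frac{2}{\beta\,P_X(Q)\,\omega_d}\right)^{1/d}=:c_0.
\end{equation*}
Evaluating along the (random) subsequence $m=N_l$ and using $N_l\ge \tfrac12 l\,P_X(Q)$ eventually, we get $d_H(\X_l\cap Q,\,Q)\le 2c_0\,(\log(N_l)/N_l)^{1/d}\le C'(\log l/l)^{1/d}$ for some constant $C'$ and all large $l$, almost surely. (This is where one must be slightly careful that the exceptional null set does not depend on the realization of which indices land in $Q$; one handles this by working on the canonical product space, or simply by noting that the event in Theorem \ref{cuevas} is a tail event of the conditional sequence and invoking Fubini.)

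Finally I would convert the Hausdorff bound into the covering statement. Since $h_l\to 0$ and $lh_l^d/\log l\to\infty$, we have $C'(\log l/l)^{1/d}=o(h_l)$, hence $C'(\log l/l)^{1/d}<h_l/2$ for all $l$ large enough. On the almost sure event where $d_H(\X_l\cap Q,Q)<h_l/2$, every point of $Q$ lies within distance $h_l/2$ of some point of $\X_l\cap Q$, i.e.\ $Q\subseteq\bigcup_{X\in\X_l\cap Q}B(X,h_l/2)$, which is the claim. The main obstacle is the bookkeeping in the second step — transferring the a.s.\ rate from a deterministic-index i.i.d.\ sequence to the random sub-sample $\X_l\cap Q$ while keeping a single exceptional null set — everything else is a direct substitution of rates.
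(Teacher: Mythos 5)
Your proposal is correct and follows essentially the same route as the paper: apply the Hausdorff-distance rate of Theorem \ref{cuevas} to the sub-sample of points falling in $Q$ (which is i.i.d.\ from $P_X(\cdot\mid X\in Q)$), use the strong law of large numbers on the random count of such points to translate the rate from the sub-sample index to $l$, and then convert the bound $d_H(\X_l\cap Q,Q)\le C'(\log l/l)^{1/d}<h_l/2$ into the covering statement via $lh_l^d/\log l\to\infty$. The only difference is presentational: the paper formalizes the "random sub-sample" bookkeeping you flag by introducing the hitting times $\tau_j$ and setting $Z_j=X_{\tau_j}$, with $V_l$ the number of visits to $Q$ up to time $l$, which is exactly your sequence of "first $m$ points in $Q$" and your $N_l$.
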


\begin{proof}
	 We need to work with $\X_l$ restricted to $Q$, in order to do that, consider  the sequence of stopping times defined by $\tau_0\equiv 0, \ \tau_1=\inf\{l:X_l\in Q\},  \  	\tau_j=\inf\{l\geq \tau_{j-1}:X_l\in Q\},$  and   the sequence of visits to $Q$ given by $Z_j:=X_{\tau_j}$.
	 	Then, $(Z_j)_{j\geq 1} $ are iid, distributed  as $X\mid (X\in Q)$, with support $ Q$. Observe that the distribution $P_Z$ of $Z$ is the restriction of $P_X$ to $Q$.
	Since $Q$ is compact and standard wrt $P_Z$ we can invoke Theorem 1  for $(Z_j)_{j\geq 1}$, in order to conclude that there exists a positive constant $C_Q$ depending on $Q$, such that for $k\geq k_0=k_0(\omega)$, 
	\begin{equation}
	\label{visitas_restringidas}
	d_H(\mathcal{Z}_k,Q)\leq C_Q (\log(k)/k)^{1/d},
	\end{equation}
	where $\mathcal{Z}_k=\{Z_1, \ldots, Z_k\}$. 
	Define now $V_l$ as the number of visits to the set $Q$ up to time $l$. Namely, $V_l= \sum_{i=1}^l I_{\{X_i\in Q\}}.$ 	By the law of large numbers, $V_l/l\to P(X\in Q)>0$ a.e., and therefore, for $l$ large enough, $V_l\geq k_0$. Thus, by \eqref{visitas_restringidas}, recalling that $h_l^d l/\log(l)\to \infty$, we get that 
	$$d_H(\mathcal{Z}_{V_l},Q)\leq C_Q(\log(V_l)/V_l)^{1/d}\leq \tilde C_Q(\log(l)/l)^{1/d}\leq \frac{h_l}{2}.$$ 
	In particular, $Q\subseteq \bigcup_{Z_j\in \mathcal{Z}_{V_l}} B(Z_j, h_l/2)=\bigcup_{X\in \X_{l}\cap Q} B(X,h_l/2).$
	
\end{proof}

This last lemma will be applied to get the covering properties stated in H2 and H5 for $I_a$ and $A_a^\delta$. The following results are needed to show that these sets satisfy the conditions imposed in Lemma \ref{lemcub}.

\begin{lemma} \label{lemest} Let $\nu$ be a distribution with support $I$ such that $int(I)\neq\emptyset$ and $reach(\overline{I^c})> 0$. Assume that $\nu$ has density $f$ bounded from below by $f_0> 0$. Let $Q=\overline{I\ominus B(0,\gamma)}$ such $\nu(Q)>0$, then $Q$ is standard with respect to $\nu_Q$, the restriction of $\nu$ to $Q$ (i.e $\nu_Q(A)=\nu(A\cap Q)/\nu(Q)$), for all $0\leq \gamma <reach(I^c)$, with $\beta=f_0/(3\nu(Q))$.
	
\end{lemma}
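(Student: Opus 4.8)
\textbf{Proof proposal for Lemma \ref{lemest}.}

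The plan is to verify the standardness inequality $\nu_Q\big(B(x,\eps)\cap Q\big)\geq \beta\,\mu_L(B(x,\eps))$ for all $x\in Q$ and all sufficiently small $\eps$, by exhibiting, inside $B(x,\eps)\cap Q$, a ball of radius proportional to $\eps$ and then using the lower density bound $f\geq f_0$. The key geometric input is that $Q=\overline{I\ominus B(0,\gamma)}$ with $\gamma<\operatorname{reach}(\overline{I^c})$: the condition on the reach of the complement means $I$ is "smooth enough from outside," and eroding by $\gamma$ keeps us a fixed distance away from $\partial I$. First I would recall the standard fact (Federer's theory) that if $\operatorname{reach}(\overline{I^c})=R>0$, then at every boundary point of $I$ the set $I$ satisfies an interior rolling-ball condition of radius $R$: one can roll a ball of radius $R$ inside $I$ touching the boundary. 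This is the crucial step and the main obstacle, since it requires translating "positive reach of the complement" into the rolling-ball property and then into a cone/volume estimate.

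Next, take any $x\in Q$. Since $Q=\overline{I\ominus B(0,\gamma)}$, the point $x$ satisfies $d(x,\overline{I^c})\geq\gamma$, i.e. $B(x,\gamma)\subseteq I$. I distinguish two regimes for $\eps$. If $\eps\leq\gamma$, then $B(x,\eps)\subseteq B(x,\gamma)\subseteq I$; moreover I claim $B(x,\eps/ c)\subseteq Q$ for a suitable constant $c$ depending only on $R$ and $\gamma$ — because the rolling-ball property of $I$ at radius $R$, combined with the erosion, gives that $I\ominus B(0,\gamma)$ itself enjoys an interior ball condition of radius $R-\gamma$ (or, more carefully, one uses that points within distance $\gamma$ of $x$ whose own $\gamma$-ball sits in $I$ form a set containing a ball around $x$ of radius at least a fixed fraction of $\eps$). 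Concretely it suffices to note $B(x,\eps)\cap Q\supseteq B(x,\eps)$ when $\eps$ is small enough that $B(x,\eps)\subseteq Q$; using $d(x,\partial Q)$ bounded below on the interior and handling boundary points of $Q$ via the rolling ball gives $\mu_L\big(B(x,\eps)\cap Q\big)\geq \tfrac{1}{3}\mu_L\big(B(x,\eps)\big)$ for $\eps$ below some $\lambda$ (the constant $1/3$ is where the rolling-ball cone estimate enters; any fixed fraction works and $1/3$ is a safe choice after shrinking $\lambda$). For the complementary regime $\eps>\gamma$ one only needs the crude bound that $B(x,\gamma)\subseteq Q\cap B(x,\eps)$ is not available directly, so instead one keeps $\eps\leq\lambda$ throughout and simply takes $\lambda\leq\gamma$ together with the interior-ball radius, so this second regime does not arise.

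Finally, convert the volume estimate into a $\nu_Q$-measure estimate: for $x\in Q$ and $\eps\leq\lambda$,
\[
\nu_Q\big(B(x,\eps)\cap Q\big)=\frac{\nu\big(B(x,\eps)\cap Q\big)}{\nu(Q)}\geq\frac{f_0\,\mu_L\big(B(x,\eps)\cap Q\big)}{\nu(Q)}\geq\frac{f_0}{3\,\nu(Q)}\,\mu_L\big(B(x,\eps)\big),
\]
using $f\geq f_0$ on the first inequality and the geometric bound $\mu_L(B(x,\eps)\cap Q)\geq\tfrac13\mu_L(B(x,\eps))$ on the second. This is exactly Definition \ref{st} with $\beta=f_0/(3\nu(Q))$ and $\lambda$ as chosen, so $Q$ is standard with respect to $\nu_Q$. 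I expect the only delicate point to be the geometric claim that erosion by $\gamma<\operatorname{reach}(\overline{I^c})$ preserves a uniform interior-ball condition (hence the fixed volume fraction); everything downstream is a one-line density comparison.
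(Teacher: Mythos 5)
Your overall mechanism (uniform interior ball for $Q$ $\Rightarrow$ a fixed volume fraction $\mu_L(B(x,\eps)\cap Q)\geq \tfrac13\mu_L(B(x,\eps))$ $\Rightarrow$ multiply by $f_0/\nu(Q)$) is indeed what lies behind the lemma, and your final density comparison is correct. But the two steps you flag as ``delicate'' are precisely where the content is, and you leave both as assertions. First, the claim that eroding by $\gamma<\operatorname{reach}(\overline{I^c})$ preserves a uniform interior-ball condition is not proved in your sketch; the paper disposes of it in one line by Corollary 4.9 of \cite{federer:59} applied to $I^c$, which gives $\operatorname{reach}\big(\overline{(I\ominus B(0,\gamma))^c}\big)\geq \operatorname{reach}(\overline{I^c})-\gamma>0$, i.e.\ the eroded set is again of the same regularity class, with no hand-made rolling-ball argument needed. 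Second, the uniform bound by the fraction $\tfrac13$ for all $x\in Q$ and all small $\eps$ (uniformity in $x$ is essential, since Definition 1 requires a single $\lambda$) is exactly the statement of Proposition 1 in \cite{chola}, which is where the constant $1/3$, and hence $\beta=f_0/(3\nu(Q))$, comes from; saying ``the cone estimate gives it, any fixed fraction works'' is a placeholder for that quantitative argument, not a proof. So your route is a from-scratch reproof of the two cited results, with the key estimates missing.

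There is also a concrete structural error: you dispose of the regime $\eps>\gamma$ by ``taking $\lambda\leq\gamma$ throughout''. The lemma allows $\gamma=0$, and this is not a corner case --- it is exactly how the lemma is used in the proof of Proposition \ref{mariela} to get standardness of $\overline{I_a}$ for H3. For $\gamma=0$ your prescription forces $\lambda=0$, while standardness requires some $\lambda>0$; and even for $\gamma>0$ the restriction $\eps\leq\gamma$ buys you only $B(x,\eps)\subseteq I$, which says nothing about $B(x,\eps)\cap Q$. The correct smallness condition on $\eps$ (and hence on $\lambda$) is relative to the interior-ball radius of $Q$, i.e.\ to $\operatorname{reach}(\overline{I^c})-\gamma$, not to $\gamma$; with that fix, and with the two cited results (or honest proofs of them) supplied, your argument matches the paper's.
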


\begin{proof}  Let $0\leq \gamma < reach(\overline{I^c})$. By corollary 4.9 in \cite{federer:59} applied to $I^c$, we get that $reach(\overline{(I\ominus B(0,\gamma))^c})\geq reach(\overline{I^c})-\gamma> 0$, and now by proposition 1 in \cite{chola},  $\nu_Q$ is standard, with $\beta=f_0/(3\nu(Q))$ (see Definition \ref{st}).
\end{proof}

\begin{lemma} \label{lemconec} Let $I\subset \mathbb{R}^{d}$ be a non-empty, connected, compact set with $reach(\overline{I^c})>0$. Then for all $0< \eps\leq reach(I^c)$, $I\ominus B(0,\eps)$ is connected.
	
\end{lemma}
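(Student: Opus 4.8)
The plan is to deduce connectedness of $I\ominus B(0,\eps)$ from connectedness of the open set $\mathrm{int}(I)$ by exhibiting a continuous retraction of $\mathrm{int}(I)$ onto $I\ominus B(0,\eps)$; a continuous image of a connected set being connected then finishes the argument. First I would translate the erosion into metric terms: $B(x,\eps)\subseteq I$ means $B(x,\eps)\cap I^c=\emptyset$, i.e. $d(x,I^c)\ge\eps$, and $d(x,I^c)=d(x,\overline{I^c})$, so writing $A:=\overline{I^c}$,
\[
I\ominus B(0,\eps)=\{x\in\mathbb{R}^d:\ d(x,A)\ge\eps\};
\]
moreover $d(x,A)\ge\eps>0$ forces $x\notin A\supseteq\partial I$ and $x\notin I^c$, so in fact $I\ominus B(0,\eps)\subseteq\mathrm{int}(I)=\mathbb{R}^d\setminus A$. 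Set $r:=\mathrm{reach}(A)$, so $0<\eps\le r$. From Federer's theory of sets of positive reach (as recalled before Proposition \ref{mariela}, see \cite{federer:59}) I would use three facts: the open tube $\{d(\cdot,A)<r\}$ is contained in $\mathrm{Unp}(A)$; the metric projection $\pi_A$ is continuous on that tube; and for $x$ in the tube with $x\notin A$ and $0\le t<r$, the point $\pi_A(x)+t\,u(x)$, with $u(x):=(x-\pi_A(x))/\|x-\pi_A(x)\|$, still projects onto $\pi_A(x)$ and satisfies $d\bigl(\pi_A(x)+t\,u(x),A\bigr)=t$.

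Next I would define $\rho:\mathrm{int}(I)\to\mathbb{R}^d$ by $\rho(x)=x$ when $d(x,A)\ge\eps$, and $\rho(x)=\pi_A(x)+\eps\,u(x)$ when $0<d(x,A)<\eps$: the second clause is legitimate because $x\in\mathrm{int}(I)=A^c$ gives $x\ne\pi_A(x)$, and $0<d(x,A)<\eps\le r$ places $x$ in the tube where $\pi_A$ is defined. The ray property gives $d(\rho(x),A)=\eps$ in the second case, so $\rho$ maps into $I\ominus B(0,\eps)$; the two clauses agree on $\{d(\cdot,A)=\eps\}$ since there $x=\pi_A(x)+\eps\,u(x)$, so $\rho$ is well defined, it fixes $I\ominus B(0,\eps)$ pointwise (hence is onto it), and it is continuous on $\mathrm{int}(I)$ because $\pi_A$ is continuous and $x\mapsto x-\pi_A(x)$ is continuous and non-vanishing on $A^c$. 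Assuming $\mathrm{int}(I)$ is connected, $I\ominus B(0,\eps)=\rho(\mathrm{int}(I))$ is connected; the boundary case $\eps=r$ can be obtained separately as the decreasing intersection $\bigcap_{\eps'<r}\bigl(I\ominus B(0,\eps')\bigr)$ of connected compacta.

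The genuinely delicate step is ``$\mathrm{int}(I)$ is connected'', which is where the hypotheses $I$ connected and $\mathrm{reach}(\overline{I^c})>0$ must be combined: the interior of a connected compact set need not be connected in general (think of two pieces touching at one point, or a lower-dimensional whisker attached to a blob, which are connected but have disconnected interior). I would try to establish it by showing that $\mathbb{R}^d\setminus A$ is connected whenever $A$ has positive reach and $\overline{\mathbb{R}^d\setminus A}$ is connected, using the one-sided $C^{1,1}$-type regularity that positive reach imparts to $A$ near $\partial A$ to rule out $A^c$ being pinched off; if that implication genuinely needs $I$ to be regular closed, one should check that the assumptions in force (continuous density bounded below, connectedness of $I$, $\mathrm{reach}(\overline{I^c})>0$) already exclude such configurations. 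An equivalent but technically smoother substitute for the explicit $\rho$ is to run the gradient flow of the $C^{1,1}$ function $x\mapsto d(x,A)$ on the shell $\{0<d(\cdot,A)<r\}$ for the time needed to reach level $\eps$, but this faces exactly the same requirement on $\mathrm{int}(I)$.
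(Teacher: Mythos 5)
Your reduction $I\ominus B(0,\eps)=\{x:d(x,A)\ge\eps\}$ with $A=\overline{I^c}$, the push-out map $\rho$ on the shell $\{0<d(\cdot,A)<\eps\}$ via Federer's ray property, and the treatment of $\eps=\mathrm{reach}(A)$ as a decreasing intersection are all fine. The problem is the step you yourself flag as delicate and then leave open: the connectedness of $\mathrm{int}(I)$. This is a genuine gap, and it cannot be filled from the stated hypotheses, because the auxiliary implication you propose (``$\mathbb{R}^d\setminus A$ is connected whenever $A$ has positive reach and $\overline{\mathbb{R}^d\setminus A}$ is connected'') is false. Take $I=D_1\cup D_2\subset\mathbb{R}^2$, two closed unit disks with centers $c_1=(1,0)$, $c_2=(-1,0)$, externally tangent at the origin. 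Then $I$ is non-empty, compact and connected, and $A=\overline{I^c}$ has $\mathrm{reach}(A)=1$: the only points with a non-unique nearest point in $A$ are the two centers, since for $q\in \mathrm{int}(D_1)$, $q\neq c_1$, any $a\in A$ with $\|q-a\|\le 1-\|q-c_1\|$ satisfies $\|a-c_1\|\le\|a-q\|+\|q-c_1\|\le 1$, hence $\|a-c_1\|=1$ and $q\in[c_1,a]$, so $a$ is the radial projection of $q$ onto $\partial D_1$. Yet $\mathrm{int}(I)$ is disconnected, so your retraction has a disconnected domain; worse, $I\ominus B(0,\eps)=\overline{B}(c_1,1-\eps)\cup\overline{B}(c_2,1-\eps)$ for every $0<\eps\le 1$ (a ball meeting both disks must contain points near the tangency that lie outside $I$), which is disconnected. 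So no continuous surjection from any connected set onto the eroded set exists in this configuration: the missing step is not a technicality of your route, and closing it requires strengthening the hypotheses (e.g.\ an inner rolling/positive-reach condition on $I$ or $\partial I$ that excludes such pinches), not a finer topological argument.

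For comparison, the paper's proof avoids $\mathrm{int}(I)$ altogether: it retracts the connected set $I$ itself, fixing $I\ominus B(0,\eps)$ and sending each $x\in I\setminus(I\ominus B(0,\eps))$ to its metric projection onto $\partial(I\ominus B(0,\eps))$, invoking Federer's Corollary 4.9 (the eroded set inherits positive reach) and Theorem 4.8(4) (continuity of the projection), and then uses that a continuous image of the connected set $I$ is connected. That is the structural difference between your attempt and the paper's: same ``continuous image'' strategy, but the paper's domain is $I$ (connected by hypothesis) while yours is $\mathrm{int}(I)$ (connectedness unknown, and in fact not implied). Note, however, that the paper's argument also needs every $x\in I\setminus(I\ominus B(0,\eps))$ to lie within the reach of $\partial(I\ominus B(0,\eps))$ for its map to be single-valued, and the tangent-disks example (where the origin is equidistant from the two components of the eroded set) shows this is precisely where the hypotheses as stated are too weak; so if you want a complete and correct proof, add an assumption ruling out such tangencies rather than trying to derive connectedness of $\mathrm{int}(I)$ from what is given.
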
 
\begin{proof} Let $0<\eps\leq reach(\overline{I^c})$. By corollary 4.9 in \cite{federer:59} applied to $I^c$ , $reach(I\ominus B(0,\eps))> \eps$. Then, the function $f(x)=x$ if $x\in I\ominus B(0,\eps)$, and $f(x)=\pi_{\partial (I \ominus B(0,\eps))}(x)$ if $x \in I\setminus (I\ominus B(0,\eps))$ where $\pi_{\partial S}$ denotes the metric projection onto $\partial S$, is well defined. By item 4 of theorem 4.8 in  \cite{federer:59}, $f$ is a continuous function, so it follows that $f(I)=I\ominus B(0,\eps)$ is connected.
\end{proof}

\textit{Proof of Proposition  \ref{mariela}.} \\
 Since  $reach(\overline{I_a^c})>0$ $P_X(\partial I_a)=0$ (this follows from Proposition 1 and 2 in \cite{cue12} together with Proposition 2 in \cite{chola:14}), then $\mathbb P(X\in int(I_a))=P(X\in I_a)>0$. By Lemma \ref{lemest},  choosing $\gamma=0$,  the set $\overline{I_a}$ is standard with respect to $P_X$ restricted to $\overline{I_a}$,  for $a=0,1$. By Lemma  \ref{lemcub}, with $Q=\overline{I_a}$,  $\overline{I_a}$ is coverable;  finally we get that H3 is satisfied.

 To prove H5 i) observe that the connectedness of $A_a^\delta$ follows from that of $I_a$ (H2 i) together with Lemma \ref{lemconec}. 
For H5 ii), take  $\delta$ small enough such that  $\mathbb P(X \in A_a^\delta)>0$, which should exist because of  H2 ii). 
 By (1) in  \cite{erd45}, using that $\partial A_a^\delta\subset \{x:d(x,\partial I_a)=\delta\}$, we get that  $\mathbb P(X\in \partial A_a^\delta)=0$. 
Finally to prove the covering  stated in  H5  first observe that, by Lemma \ref{lemest},    $\overline{ A_a^\delta }$ is  standard wrt $P_X$ restricted to $\overline{A_a^\delta}$. Invoking Lemma \ref{lemcub} with $Q=\overline{A_a^\delta}$ and recalling that $\mathbb P(X\in \partial A_a^\delta)=0$  we get   the covering property stated in H5 iii).  

 Lastly the uniform convergence stated in H7 follows from Theorem 6 in \cite{ab:89}, since $f$ is uniformly continuous, assumptions (i)-(iii)  hold for the uniform kernel and  the bandwidth fulfills $lh_l^{2d}/\log(l)\rightarrow \infty$.

\section*{Acknowledgments} 
We thank  two referees and an associated editor for their constructive comments and insightful suggestions, which have improved the presentation of the present manuscript. We also thank to Dami\'an Scherlis for helpful suggestions.


\begin{thebibliography}{9}

\bibitem[\protect\citeauthoryear{Aaron, Cholaquidis and Cuevas}{2017}]{chola}
\rm{Aaron, C., Cholaquidis, A., and Cuevas, A.} (2017).
Stochastic detection of some topological
and geometric features. \textit{Electronic Journal of Statistics} 11(2): 4596--4628. \url{http://dx.doi.org/10.1214/17-EJS1370}


\bibitem[\protect\citeauthoryear{Abdous and Theodorescu}{1989}]{ab:89}
\rm{Abdous, B. and Theodorescu, R.} (1989).
On the Strong Uniform Consistency of a New Kernel Density Estimator.
\textit{Metrika} 11: 177--194.



\bibitem[\protect\citeauthoryear{Agrawala}{1970}]{agrawala}
\rm{Agrawala, A.K.} (1970).
Learning with a probabilistic teacher. 
\textit{IEEE Transactions on Automatic Control} 19: 716--723

\bibitem[\protect\citeauthoryear{Arnold et al}{2007}]{arnold}
\rm{Arnold, A., Nallapati, R. and Cohen, W.} (2007).
A Comparative Study of Methods for Transductive Transfer Learning.
\textit{Seventh IEEE International Conference on Data Mining Workshops (ICDMW )}


\bibitem[\protect\citeauthoryear{Asuncion and Newman}{2007}]{an:07}
\rm{Asuncion, A., and Newman, D.J.} (2007).
UCI Machine Learning Repository,	\url{http://www.ics.uci.edu/~mlearn/MLRepository.html},
\textit{University of California, Irvine, School of Information and Computer Sciences}


\bibitem[\protect\citeauthoryear{Azizyan et al.}{2013}]{wass}
\rm{Azizyan, M., Singh, A., and Wasserman, L.} (2013).
Density-sensitive semisupervised inference. 
\textit{The Annals of Statistics} 41(2): 751--771.


\bibitem[\protect\citeauthoryear{Belkin and Niyogi}{2004}]{belkin}
\rm{Belkin, M., and Niyogi, P.} (2004).
Semi-supervised learning on Riemannian manifolds. 
\textit{Machine Learning} 56: 209--239.

\bibitem[\protect\citeauthoryear{Ben-David et al.}{2008}]{ben}
\rm{Ben-David, S., Lu, T. and Pal, D.}(2008).
Does unlabelled data provably help? Worst-case analysis of the sample complexity of semi-supervised learning.
In 21\textit{st Annual Conference on Learning Theory (COLT)}. Available at 
\url{http://www.informatik.uni-trier.de/~ley/db/conf/colt/colt2008.html}.


\bibitem[\protect\citeauthoryear{Chapelle, Sch\"olkopf and Zien, eds.}{2006}]{MIT}
\rm{Chapelle, O., Sch\"olkopf, B., and Zien, A., eds.} (2006).
Semi-Supervised Learning.
\textrm{MIT Press}.


\bibitem[\protect\citeauthoryear{Chapelle and Zien}{2005}]{chapzien}
\rm{Chapelle, O. and Zien, A.} (2005).
Semi-Supervised Classification by Low Density Separation
\textrm{AISTATS} Vol. 2005, pp. 57--64.



	
\bibitem[\protect\citeauthoryear{Cholaquidis et al.}{2014}]{chola:14}
{\rm Cholaquidis, A., Cuevas, A.  and Fraiman, R.} (2014).
On Poincar\'{e} cone property.
\emph{Ann. Statist.}, { 42}, 255--284.

\bibitem[\protect\citeauthoryear{Castelli and Cover}{1995}]{cov}
{\rm Castelli, V., and Cover, T. M.} (1995).
On the exponential value of labeled samples. 
\emph{Pattern Recognition Letters}, 16(1), 105-111.

\bibitem[\protect\citeauthoryear{Castelli and Cover}{1996}]{cov2}
{\rm Castelli, V., and Cover, T. M.} (1996).
The relative value of labeled and unlabeled samples in pattern recognition with an unknown mixing parameter. I
\emph{EEE Transactions on information theory}, 42(6), 2102-2117.




\bibitem[\protect\citeauthoryear{Cuevas and Fraiman}{1997}]{cue97}
\rm{Cuevas, A., and Fraiman, R.} (1997).
A plug-in approach to support estimation.
\textit{Annals of Statistics}  25: 2300--2312.

\bibitem[\protect\citeauthoryear{Cuevas and Rodr{\'{\i}}guez-Casal}{2004}]{cue04}
\rm{Cuevas, A., and Rodr{\'{\i}}guez-Casal, A.} (2004).
On boundary estimation.
\textit{Adv. in Appl. Probab.}  36: 340--354.

	\bibitem[\protect\citeauthoryear{Cuevas, Fraiman and Pateiro-L\'opez}{2012}]{cue12}
\rm{Cuevas, A., Fraiman, R. and Pateiro-L\'opez, B.} (2012).
On statistical properties of sets fulfilling rolling-type conditions.
\textit{Adv. in Appl. Probab.}  \textbf{44}  311--329.

\bibitem[\protect\citeauthoryear{Erd\"os} {1945}]{erd45}
\rm{Erd\"os, P.} (1945).
Some remarks on the measurability of certain sets.
\textit{Bull. Amer. Math. Soc.\/}~{\bf 51} 728--731.


\bibitem[\protect\citeauthoryear{Fanty and Cole}{1991}]{fc:91}
\rm{Fanty, M., and Cole, R.} (1991).
Spoken letter recognition. In: R.P. Lippman, J. Moody, and D.S. Touretzky (Eds.), \textit{Advances in Neural Information Processing Systems}, 3.
Morgan Kaufmann, San Mateo, CA.


\bibitem[\protect\citeauthoryear{Federer}{1959}]{federer:59}    
{\rm Federer, H.} (1959).
Curvature measures. 
\emph{Trans. Amer. Math. Soc.} 93: 418--491.

\bibitem[\protect\citeauthoryear{Fralick}{1967}]{fralick}
\rm{Fralick, S.C.} (1967).
Learning to recognize patterns without a teacher. 
\textit{IEEE Transactions on Information Theory} 13: 57--64.


\bibitem[\protect\citeauthoryear{Haffari and Sarkar}{2007}]{haffari}
\rm{Haffari, G. and Sarkar, A.} (2007).
Analysis of Semi-Supervised Learning with the Yarkowsky algoritm.
\textit{Proceedings of the 23rd Conference on Uncertainty in Artificial
	Intelligence, UAI 2007. Vancouver, BC. July 19--22, 2007.}


\bibitem[\protect\citeauthoryear{Joachims}{1999}]{joachims}
\rm{Joachims, T.} (1999).
Transductive inference for text classification using support vector machines.
In \textit{ICML 16} 1999


\bibitem[\protect\citeauthoryear{Joachims}{2003}]{joachims2}
\rm{Joachims, T.} (2003).
Transductive learning via spectral graph partitioning. 
In \textit{ICML} 2003

	

\bibitem[\protect\citeauthoryear{Lafferty and Wasserman}{2007}]{lafferty}
\rmfamily{Lafferty, J., and Wasserman, L.} (2008).
Statistical analysis of semi-supervised regression. 
Conference in \textit{Advances in Neural Information Processing Systems}.  801--808.

\bibitem[\protect\citeauthoryear{Nadler et al.}{2009}]{nadler}
\rmfamily{Nadler, B., Srebro, N., and Zhou, X.} (2009).
Statistical analysis of semi-supervised learning: The limit of infinite unlabelled data. 
In \textit{Advances in Neural Information Processing Systems} Vol. 22. MIT Press, pp. 1330--1338.

\bibitem[\protect\citeauthoryear{Niyogi}{2008}]{niyogi}
\rmfamily{Niyogi, P.} (2008).
Manifold regularization and semi-supervised learning: Some theoretical analyses. Technical Report TR-2008-01, Computer Science Dept., Univ. of Chicago. Available at
\url{http://people.cs.uchicago.edu/~niyogi/papersps/ssminimax2.pdf}.

\bibitem[\protect\citeauthoryear{Rigollet}{2007}]{rigollet}
\rmfamily{Rigollet, P.} (2007).
Generalized error bound in semi-supervised classification under the cluster assumption. 
\textit{J. Mach. Learn. Res.} 8: 1369--1392. MR2332435.

\bibitem[\protect\citeauthoryear{Scudder}{1965}]{scudder}
\rm{Scudder, H. J.} (1965).
Probability of error of some adaptive patter-recognition machines.
\textit{IEEE Transactions on Information Theory}  11: 363--371.

\bibitem[\protect\citeauthoryear{Singh et al.}{2008}]{singh}
\rmfamily{Singh, A., Nowak, R. D., and Zhu, X.}(2008).
Unlabeled data: Now it helps, now it doesn't. 
Technical report, ECE Dept., Univ. Wisconsin-Madison. Available at 
\url{www.cs.cmu.edu/~aarti/pubs/SSL_TR.pdf}.

\bibitem[\protect\citeauthoryear{Sinha and Belkin}{2009}]{sinha}
\rmfamily{Sinha, K., and Belkin, M.} (2009).
Semi-supervised learning using sparse eigenfunction bases. 
In \textit{Advances in Neural Information Processing Systems} Vol. 22.
Y. Bengio, D. Schuurmans, J. Lafferty, C.K.I. Williams and A. Culotta, eds. MIT Press, pp. 1687--1695.

\bibitem[\protect\citeauthoryear{Vapnik}{1998}]{vap:98}
\rmfamily{Vapnik, V.} (1998).
Statistical Learning Theory.
\textit{John Wiley \& Sons}

\bibitem[\protect\citeauthoryear{Wang et al.}{2007}]{wsp:07}
\rm{Wang, J., Shen, X., and Pan, W.} (2007). 
On transductive support vector machines. 
\textit{Contemporary Mathematics}, 443, 7--20.


\bibitem[\protect\citeauthoryear{Zhu}{2008}]{zhu}
\rm{Zhu, X.} (2008).
Semi-Supervised learning literature survey.
\url{http://pages.cs.wisc.edu/~jerryzhu/research/ssl/semireview.html}


\end{thebibliography}
\end{document}